\def\eqref#1{equation~\ref{#1}}
\def\1{\bm{1}}
\DeclareMathAlphabet{\mathsfit}{\encodingdefault}{\sfdefault}{m}{sl}
\SetMathAlphabet{\mathsfit}{bold}{\encodingdefault}{\sfdefault}{bx}{n}
\newtheorem{theorem}{Theorem}
\definecolor{light-gray}{gray}{0.90}
\newcommand{\ours}{\textbf{EP}}
\newcommand{\de}{\textsc{DropEdge}}
\newcommand{\ade}{\textsc{AddEdge}}
\newcommand{\cora}{\textsc{Cora}}
\newcommand{\cs}{\textsc{CiteSeer}}
\newcommand{\pub}{\textsc{PubMed}}
\newcommand{\ph}{\textsc{Photo}}
\newcommand{\com}{\textsc{Computers}}
\newcommand{\cocs}{\textsc{Coauthor-CS}}
\newcommand{\cophy}{\textsc{Coauthor-Phy}}
\newcommand{\mutag}{\textsc{MUTAG}}
\newcommand{\pt}{\textsc{PROTEINS}}
\newcommand{\nci}{\textsc{NCI1}}
\newcommand{\bi}{\textsc{IMDB-BINARY}}
\newcommand{\multi}{\textsc{IMDB-MULTI}}
\newtheorem{definition}{Definition}
\newcommand{\SPAN}{\textsc{SPAN}}
\newcommand{\ppr}{\textsc{PPR}}
\newcommand{\tabitem}{\quad \textbullet \hspace{0.5em}}
\title{Rethinking Spectral Augmentation for Contrast-based Graph Self-Supervised Learning}
\author{%
  Xiangru Jian\thanks{Xiangru Jian, Xinjian Zhao, and Wei Pang contributed equally to this paper.} \\
  Department of Computer Science\\
  University of Waterloo\\
\texttt{xiangru.jian@uwaterloo.ca} \\
  \And
  Xinjian Zhao\footnotemark[1] \\
  School of Data Science\\
  The Chinese University of Hong Kong, Shenzhen \\
\texttt{xinjianzhao1@link.cuhk.edu.cn} \\
  \AND
  Wei Pang\footnotemark[1] \\
  Department of Computer Science\\
  University of Waterloo\\
  Vector Institute \\
  \texttt{w3pang@uwaterloo.ca} \\
  \And
  Chaolong Ying \\
  School of Data Science\\
  The Chinese University of Hong Kong, Shenzhen \\
\texttt{chaolongying@link.cuhk.edu.cn} \\
  \And
  Yimu Wang \\
  Department of Computer Science\\
  University of Waterloo\\
\texttt{yimu.wang@uwaterloo.ca} \\
  \And
  Yaoyao Xu \\
  School of Data Science\\
  The Chinese University of Hong Kong, Shenzhen \\
  \texttt{xuyaoyao@cuhk.edu.cn} \\
  \And
  Tianshu Yu\thanks{Corresponding author} \\
  School of Data Science\\
  The Chinese University of Hong Kong, Shenzhen \\
\texttt{yutianshu@cuhk.edu.cn} \\
}
\begin{document}

\maketitle

\begin{abstract}

The recent surge in contrast-based graph self-supervised learning has prominently featured an intensified exploration of spectral cues. Spectral augmentation, which involves modifying a graph's spectral properties such as eigenvalues or eigenvectors, is widely believed to enhance model performance. However, an intriguing paradox emerges, as methods grounded in seemingly conflicting assumptions regarding the spectral domain demonstrate notable enhancements in learning performance. Through extensive empirical studies, we find that simple edge perturbations - random edge dropping for node-level and random edge adding for graph-level self-supervised learning - consistently yield comparable or superior performance while being significantly more computationally efficient. This suggests that the computational overhead of sophisticated spectral augmentations may not justify their practical benefits. Our theoretical analysis of the InfoNCE loss bounds for shallow GNNs further supports this observation. The proposed insights represent a significant leap forward in the field, potentially refining the understanding and implementation of graph self-supervised learning.

\end{abstract}
\section{Introduction}\label{sec:intro}
In recent years, graph learning has emerged as a powerhouse for handling complex data relationships in multiple fields, offering vast potential and value, particularly in domains such as data mining~\cite{hamilton2017inductive}, computer vision~\cite{xu2017scene}, network analysis~\cite{chen2020can}, and bioinformatics~\cite{jin2018junction}. However, limited labels make graph learning challenging to apply in real-world scenarios. Inspired by the great success of Self-Supervised Learning (SSL) in other domains~\cite{devlin2018bert,chen2020simple}, Graph Self-Supervised Learning (Graph SSL) has made rapid progress and has shown promise by achieving state-of-the-art performance on many tasks~\cite{xie2022self}, where \textbf{C}ontrast-based \textbf{G}raph \textbf{SSL} (\textbf{CG-SSL}) are most dominant~\cite{2023liu}. This type of method is grounded in the concept of mutual information (MI) maximization. The primary goal is to maximize the estimated MI between augmented instances of the same object, such as nodes, subgraphs, or entire graphs. Among the new developments in \textbf{CG-SSL}, approaches inspired by graph spectral methods have garnered significant attention. A prevalent conviction is that spectral information, including the eigenvalues and eigenvectors of the graph's Laplacian, plays a crucial role in enhancing the efficacy of \textbf{CG-SSL}~\cite{liu2022revisiting,ko2023universal,lin2023spectral,yang2023augment, chen2024polygcl}.

In general, methods in \textbf{CG-SSL} can be categorized into two types based on whether augmentation is performed on the input graph to generate different views~\cite{chen2024polygcl}. i.e. augmentation-based and augmentation-free methods. Of the two, the augmentation-based methods are more predominant and widely studied~\cite{hassani2020contrastive, 2023liu, you2020graph, liu2022revisiting, lin2023spectral, yang2023augment}. Specifically, spectral augmentation has received significant attention, as it modifies a graph's spectral properties. This approach is believed to enhance model performance, aligning with the proposed importance of spectral information in \textbf{CG-SSL}. However, there seems no consensus on the true effectiveness of spectral information in the previous works proposing and studying spectral augmentation. SpCo~\cite{liu2022revisiting} introduces the general graph augmentation (GAME) rule, which suggests that the difference in high-frequency parts between augmented graphs should be larger than that of low-frequency parts. SPAN~\cite{lin2023spectral} contends that effective topology augmentation should prioritize perturbing sensitive edges that have a substantial impact on the graph spectrum. Therefore, a principled augmentation method is designed by directly maximizing spectral change with a certain perturbation budget,  without mentioning any specific domain of spectrum. GASSER~\cite{yang2023augment} selectively perturbs graph structures based on spectral cues to better maintain the required invariance for contrastive learning frameworks. Specifically, it aims to augment the graphs to preserve task-relevant frequency components and perturb the task-irrelevant ones with care. While all three related methods are augmentation-based and share in the set of \textbf{CG-SSL} frameworks like GRACE~\cite{zhu2020deep} and MVGRL~\cite{hassani2020contrastive}, a contradiction emerges among these related works on spectral augmentation: while SPAN advocates for \textbf{maximizing the distance} between the spectrum of augmented graphs regardless of spectral domains, SpCo and GASSER argue for the \textbf{preservation} of specific spectral components and domains during augmentation. The consistent performance gain derived from opposing methodical designs naturally raises our concern:

\begin{center}
    \begin{tabular}{@{}l@{}}
      \tabitem \emph{Are spectral augmentations necessary in contrast-based graph SSL}?
    \end{tabular}
\end{center}

Given the question, this study aims to critically evaluate the effectiveness and significance of spectral augmentation in contrast-based graph SSL frameworks (\textbf{CG-SSL}). With evidence-supported claims and findings in the following sections, we show that despite their computational complexity, sophisticated spectral augmentations do not demonstrate clear advantages over simple edge perturbations. Our extensive experiments reveal that straightforward edge perturbations consistently achieve superior performance while being significantly more computationally efficient. Our theoretical analysis on the InfoNCE loss bounds for shallow GNNs provides additional insights into understanding this phenomenon and supports our claims. We elaborate on our findings through a series of studies carried out in the following efforts:

\begin{enumerate}
    \item In Sec.~\ref{sec:limit}, we demonstrate that shallow networks consistently achieve better performance in \textbf{CG-SSL}, analyze their inherent limitations in capturing global spectral information, and provide theoretical bounds on the InfoNCE loss that help explain the limited benefits of sophisticated spectral augmentations compared to simple edge perturbation.

    \item In Sec~\ref{subsec:our_method}, we claim that simple edge perturbation techniques, like adding edges to or dropping edges from the graph, not only compete well but often outperform spectral augmentations, without any significant help from spectral cues. To support this, 

         \textbf{(a)} In Sec.~\ref{sec:exp}, overall model performance on test accuracy with four state-of-the-art frameworks on both node- and graph-level classification tasks support the superiority of simple edge perturbation.\newline
          \textbf{(b)} Studies in Sec.~\ref{subsec:conv_spec} reveal the indistinguishability between the average spectrum of augmented graphs from edge perturbation with optimal parameters on different datasets, no matter how different that of original graphs is, indicating GNN encoders can hardly learn spectral information from augmented graphs. That is to say, edge perturbations can not benefit from spectral information.\newline
          \textbf{(c)} In Sec.~\ref{subsec:spec_perturb}, we analyze the effectiveness of state-of-the-art spectral augmentation baseline (\textit{i.e.}, SPAN) by perturbing edges to alter the spectral characteristics of augmented graphs from simple edge perturbation augmentation and examining the impact on model performance. As it turns out, the results show no performance degradation, indicating the spectral information contained in the augmentation is not significant to the model performance.\newline
          \textbf{(d)} In Appendix~\ref{app:relation}, statistical analysis is carried out to argue that the major reason edge perturbation works well is not because of the spectral information as they are statistically not the key factor on model performance. 

\end{enumerate}

\section{Related work}

\textbf{Contrast-based Graph Self-Supervised (\textbf{CG-SSL}).}  \textbf{CG-SSL} learning alleviates the limitations of supervised learning, which heavily depends on labeled data and often suffers from limited generalization~\cite{liu2022graph}. This makes it a promising approach for real-world applications where labeled data is scarce. 

\textbf{CG-SSL} applies a variety of augmentations to the training graph to obtain augmented views. These augmented views, which are derived from the same original graph, are treated as positive sample pairs or sets. The key objective of \textbf{CG-SSL} is to maximize the mutual information between these views to learn robust and invariant representations. However, directly computing the mutual information of graph representations is challenging. Hence, in practice, \textbf{CG-SSL} frameworks aim to maximize the lower bound of mutual information using different estimators such as InfoNCE~\cite{gutmann2010noise}, Jensen-Shannon~\cite{nowozin2016f}, and Donsker-Varadhan~\cite{belghazi2018mutual}. For instance, frameworks like GRACE~\cite{zhu2020deep}, GCC~\cite{qiu2020gcc}, and GCA~\cite{zhu2021graph} utilize the InfoNCE estimator as their objective function. On the other hand, MVGRL~\cite{hassani2020contrastive} and InfoGraph~\cite{sun2019infograph} adopt the Jensen-Shannon estimator.  

Some \textbf{CG-SSL} methods explore alternative principles. G-BT~\cite{bielak2022graph} extends the redundancy-reduction principle by decorrelating representations between two augmented views to prevent feature collapse. BGRL~\cite{thakoor2021bootstrapped} adopts a momentum-driven Siamese architecture, using node feature masking and edge modification as augmentations to maximize mutual information between online and target network representations.

\textbf{Graph Augmentations in \textbf{CG-SSL}.} 
Beyond the choice of objective functions, another crucial aspect of augmentation-based methods in \textbf{CG-SSL} is the selection of augmentation techniques. Early work by~\cite{zhu2020deep} and~\cite{you2020graph} introduced several domain-agnostic heuristic graph augmentation for \textbf{CG-SSL}, such as edge perturbation, attribute masking, and subgraph sampling. These straightforward and effective methods have been widely adopted in subsequent \textbf{CG-SSL} frameworks due to their demonstrated success~\cite{thakoor2021bootstrapped,yu2024provable}. However, these domain-agnostic graph augmentations often lack interpretability, making it difficult to understand the exact impact of these augmentations on the graph structure and learning outcomes.

To address this issue, MVGRL~\cite{hassani2020contrastive} introduces graph diffusion as an augmentation strategy, where the original graph provides local structural information and the diffused graph offers global context.

Moreover, three spectral augmentation methods--SpCo~\cite{liu2022revisiting}, GASSER~\cite{yang2023augment}, and SPAN~\cite{lin2023spectral}--stand out by offering design principles based on spectral graph theory, focusing on how to enhance \textbf{CG-SSL} performance through spectral manipulations.

However, our explorations show that these methods are unable to consistently outperform heuristic graph augmentations such as edge perturbation (\de\ or \ade) in terms of performance under fair comparisons, and thus the design principles of graph augmentation still require further validation.
\section{Preliminary study} \label{sec:pre_study}

\textbf{Contrast-based graph self-supervised learning framework.} \textbf{CG-SSL} captures invariant features of a graph by generating multiple views (typically two) through augmentations and then maximizing the mutual information between these views~\cite{xie2022self}. This approach is ultimately used to improve performance on various downstream tasks. Following previous work~\cite{wu2021self,liu2022graph,xie2022self}, we first denote the generic form of the augmentation $\mathcal{T}$ and objective functions $\mathcal{L}_{cl}$ of graph contrastive learning. Given a graph $\mathcal{G}=(\mathbf{A}, \mathbf{X})$ with adjacency matrix $\mathbf{A}$ and feature matrix $\mathbf{X}$, the augmentation is defined as the transformation function $\mathcal{T}$. In this paper, we are mainly concerned with topological augmentation, in which feature matrix $\mathbf{X}$ remains intact:
\begin{equation}
\widetilde{\mathbf{A}}, \widetilde{\mathbf{X}} = \mathcal{T}(\mathbf{A}, \mathbf{X}) = \mathcal{T}(\mathbf{A}), \mathbf{X}
\end{equation}

In practice, two augmented views of the graph are generated, denoted as $\mathcal{G}^{(1)} = \mathcal{G}(\mathcal{T}_1(\mathbf{A}, \mathbf{X}))$ and
$\mathcal{G}^{(2)} = \mathcal{G}(\mathcal{T}_2(\mathbf{A}, \mathbf{X}))$. The objective of GCL is to learn representations by minimizing the contrastive loss $\mathcal{L}_{cl}$ between the augmented views:
\begin{equation}
\theta^*, \phi^*=\underset{\theta, \phi}{\arg \min } \mathcal{L}_{c l}\left(p_\phi\left(f_\theta\left({\mathcal{G}}^{(1)}\right), f_\theta\left({\mathcal{G}}^{(2)}\right)\right)\right),
\end{equation}
where $f_\theta$ represents the graph encoder parameterized by $\theta$, and $p_\phi$ is a projection head parameterized by $\phi$. The goal is to find the optimal parameters $\theta^* $ and $\phi^* $ that minimize the contrastive loss.

In this paper, we utilize four prominent \textbf{CG-SSL} frameworks to study the effect of spectral: MVGRL, GRACE, BGRL, and G-BT. MVGRL introduces graph diffusion as augmentation, while the other three frameworks use edge perturbation as augmentation. Each framework employs different strategies for its contrastive loss functions. MVGRL and GRACE use the Jensen-Shannon and InfoNCE estimators as object functions, respectively. In contrast, BGRL and G-BT adopt the BYOL loss~\cite{grill2020bootstrap} and Barlow Twins loss~\cite{zbontar2021barlow}, which are designed to maximize the agreement between the augmented views without relying on negative samples. A more detailed description of the loss function can be found in the Appendix~\ref{appendix:obj_fun}.

\textbf{Graph spectrum \& Definition and application of spectral augmentation.} We follow the standard definition of graph spectrum in this study, details of which can be found in Appendix~\ref{app:pre_span}. Among various augmentation strategies proposed to enhance the robustness and generalization of graph neural networks, spectral augmentation has been considered a promising avenue~\cite{lin2023spectral, liu2022revisiting, bo2023graph, yang2023augment}. Spectral augmentation typically involves implicit modifications to the eigenvalues of the graph Laplacian, aiming at enhancing model performance by encouraging invariance to certain spectral properties. Among them, SPAN achieved state-of-the-art performance in both node classification and graph classification. In short, SPAN elaborates two augmentation functions, $\mathcal{T}_1$ and $\mathcal{T}_2$, where $\mathcal{T}_1$ maximizes the spectral norm in one view, and $\mathcal{T}_2$ minimizes it in the other view. Subsequently, these two augmentations are implemented in the four \textbf{CG-SSL} frameworks mentioned above (Strict definition in Appendix~\ref{app:pre_span}). The paradigm used by SPAN aims to allow the GNN encoder to focus on robust spectral components and ignore the sensitive edges that can change the spectral drastically when perturbed.
\section{Limitations of spectral augmentations} \label{sec:limit}

\textbf{Limitations of shallow GNN encoders in capturing spectral information.} \label{subsec: shallow} 
Multiple previous studies indicate that shallow, rather than deep, GNN encoders can be effective in graph self-supervised learning. This might be the result of overfitting commonly witnessed in standard GNN tasks. We have also carried out many empirical studies with a range of \textbf{CG-SSL} frameworks and augmentations to support this idea in contrast-based graph SSL. As the most commonly applied GNN encoder in CG-SSL~\cite{you2020graph,yu2024provable,guo2024architecture,lin2024certifiably}, an empirical study on the relationship between the depth of GCN encoder and learning performance is conducted and results are presented in Fig.~\ref{fig:num_layers}. From that, we can conclude that shallow GCN encoders with 1 or 2 layers usually have the best performance. 
Note that this tendency is not clear on graph-level tasks, which can be partially explained by the beneficial oversmoothing phenomenon present in this context~\cite{southern2024understanding}. It suggests that while deep encoders may have theoretically better expressive power than shallower ones, the limited benefits of deeper GNN architectures in the current \textbf{CG-SSL} practice imply that more layers may not bring significant improvements and could even hinder the quality of the learned graph representations.

\begin{figure}[h]
    \centering
    \begin{subfigure}[b]{0.245\textwidth}
        \centering
        \includegraphics[width=\textwidth]{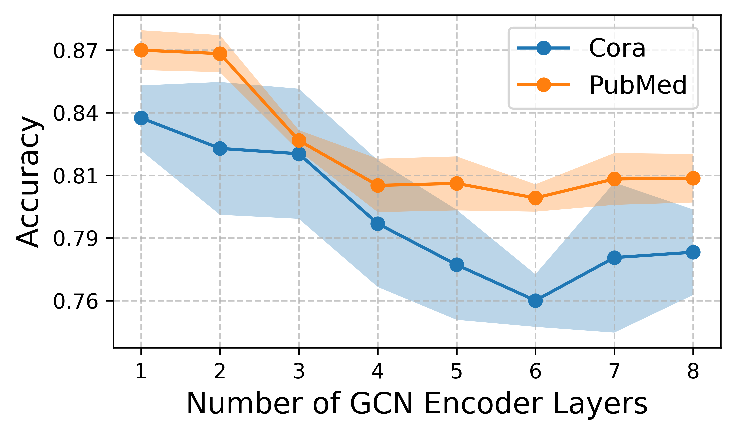}
        \caption{G-BT on node CLS}
        \label{fig:sub1}
    \end{subfigure}
    \hfill
    \begin{subfigure}[b]{0.245\textwidth}
        \centering
        \includegraphics[width=\textwidth]{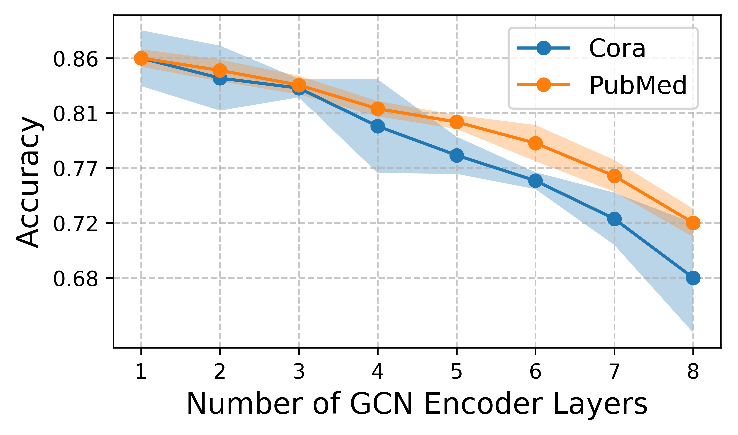}
        \caption{MVGRL on node CLS}
        \label{fig:sub2}
    \end{subfigure}
    \begin{subfigure}[b]{0.245\textwidth}
        \centering
        \includegraphics[width=\textwidth]{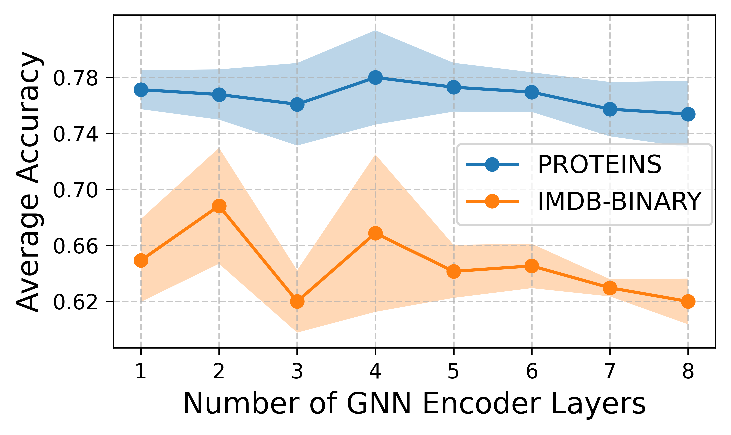}
        \caption{G-BT on graph CLS}
        \label{fig:sub3}
    \end{subfigure}
    \begin{subfigure}[b]{0.245\textwidth}
        \centering
        \includegraphics[width=\textwidth]{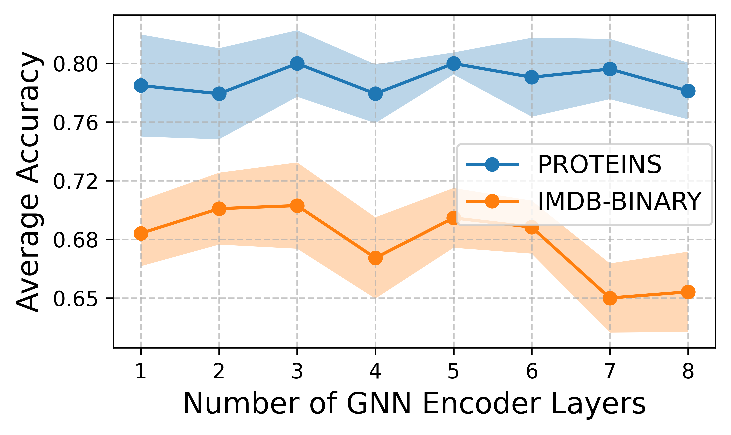}
        \caption{MVGRL on graph CLS}
        \label{fig:sub4}
    \end{subfigure}
    \caption{Accuracy of \textbf{CG-SSL} vs. number of GCN layers on node and graph classification on four datasets. (a) G-BT on node classification. (b) MVGRL on node classification. (c) G-BT on graph classification. (d) MVGRL on graph classification.  We choose two representative datasets for each task, i.e. \cora\ and \cs\ for the node-level and \pt\ and \bi\ for the graph-level classification. The evaluation protocol, along with dataset details and other experimental settings, are provided in Section \ref{subsec:exp:setting}.}
    \label{fig:num_layers}\vspace{-1.9mm}
\end{figure}

By design, most GNN encoders primarily aggregate local neighborhood information through their layered structure, where each layer extends the receptive field by one hop. The depth of a GNN critically determines its ability to integrate information from various parts of the graph. With only a limited number of layers, a GNN's receptive field is restricted to immediate neighborhoods (e.g., 1-hop or 2-hop distances). This limitation severely constrains the network's ability to assimilate and leverage broader graph topologies or global features that are essential for encoding the spectral properties of the graph, given the definition of the graph spectrum.

\textbf{Limited implications for spectral augmentation in \textbf{CG-SSL}.} \label{subsec: spec_in_ssl} Given the inherent limitations of shallow GNNs in capturing spectral information, the utility of spectral augmentation techniques in graph self-supervised learning settings warrants scrutiny. 
While spectral augmentation techniques modify the graph's spectral components (e.g., eigenvalues and eigenvectors) to enrich the training process, their benefits may be limited if the primary encoder—a shallow GNN—cannot effectively process these spectral properties. To formally validate this intuition, we establish the following theoretical analysis.

\textbf{Theoretical Analysis of InfoNCE Bounds.} 
To better understand the effectiveness of augmentations in shallow GNNs, we derive the following theoretical bounds on InfoNCE loss:

\begin{theorem}[InfoNCE Loss Bounds]
\label{theorem:tighter_infonce_bounds}
Given a graph $\mathcal{G}$ with minimum degree $d_{\min}$ and maximum degree $d_{\max}$, and its augmentation $\mathcal{G}'$ with local topological perturbation strength $\delta$, for a $k$-layer GNN with ReLU activation and weight matrices satisfying $\left\| \mathbf{W}^{(l)} \right\|_2 \leq L_W$, and assuming that the embeddings are normalized ($\left\| \mathbf{z}_v \right\| = \left\| \mathbf{z}'_v \right\| = 1$), the InfoNCE loss satisfies with high probability:
\begin{equation}
- \log \left( \frac{ e^{ 1 / \tau } }{ e^{ 1 / \tau } + (n - 1) e^{ - \epsilon' / \tau } } \right) \leq \mathcal{L}_{\text{InfoNCE}}( \mathcal{G}, \mathcal{G}' ) \leq - \log \left( \frac{ e^{ \left( 1 - \dfrac{ \epsilon^2 }{ 2 } \right ) / \tau } }{ e^{ \left( 1 - \dfrac{ \epsilon^2 }{ 2 } \right ) / \tau } + (n - 1) e^{ \epsilon' / \tau } } \right),
\end{equation}
where $\epsilon$ is as defined in Lemma ~\ref{lemma5} and $\epsilon'$ is as defined in Lemma ~\ref{lemma6}. Detailed descriptions of notations can be found in Table ~\ref{tab:notations}. The proof of Theorem ~\ref{theorem:tighter_infonce_bounds} and all related Lemmas can be found in Appendix ~\ref{app:theory}.
\end{theorem}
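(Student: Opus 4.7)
The plan is to reduce the bound on $\mathcal{L}_{\text{InfoNCE}}$ to two scalar estimates: (i) how close the positive-pair similarity stays to $1$ under augmentation strength $\delta$, and (ii) how small the negative-pair similarities can be kept. Concretely, I would start from the single-anchor form
\begin{equation*}
\mathcal{L}_{\text{InfoNCE}}(\mathcal{G},\mathcal{G}') = -\log \frac{e^{\langle \mathbf{z}_v, \mathbf{z}'_v\rangle/\tau}}{e^{\langle \mathbf{z}_v, \mathbf{z}'_v\rangle/\tau} + \sum_{u\neq v} e^{\langle \mathbf{z}_v, \mathbf{z}'_u\rangle/\tau}},
\end{equation*}
and observe that, since all embeddings are unit-norm, inner products and cosine similarities coincide and lie in $[-1,1]$. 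The functional is monotonically decreasing in the positive similarity $s_+:=\langle \mathbf{z}_v,\mathbf{z}'_v\rangle$ and monotonically increasing in each negative similarity $s_u:=\langle \mathbf{z}_v,\mathbf{z}'_u\rangle$, so the two bounds will follow immediately from worst/best-case values of $s_+$ and $s_u$.

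For the positive pair I would invoke Lemma~\ref{lemma5}, which I expect to give a stability statement of the form $\|\mathbf{z}_v-\mathbf{z}'_v\|\leq \epsilon$, where $\epsilon$ is built by a layer-by-layer Lipschitz propagation of the perturbation $\delta$: using that ReLU is $1$-Lipschitz, $\|\mathbf{W}^{(l)}\|_2\leq L_W$, and that the symmetric normalized aggregation operator has its perturbation controlled by a factor depending on $d_{\min}$ and $d_{\max}$, one obtains a bound of order $L_W^{k}$ times a $\delta$-dependent factor. Combined with unit normalization, the polarization identity $\langle \mathbf{z}_v,\mathbf{z}'_v\rangle = 1 - \tfrac{1}{2}\|\mathbf{z}_v-\mathbf{z}'_v\|^2$ then gives $s_+\geq 1-\epsilon^2/2$, while trivially $s_+\leq 1$.

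For the negatives I would use Lemma~\ref{lemma6} to conclude $|\langle \mathbf{z}_v,\mathbf{z}'_u\rangle|\leq \epsilon'$ with high probability for all $u\neq v$, i.e.\ a near-orthogonality statement for distinct-node embeddings. The natural route is a concentration argument (Hoeffding or matrix-Bernstein) on inner products of sub-Gaussian-like embedding coordinates, with a union bound over the $n-1$ negatives contributing the logarithmic factor. This is the step I anticipate being the main technical obstacle, since unlike the deterministic Lipschitz propagation it requires either a probabilistic model of the input features or a spread assumption on the encoder output distribution on the sphere; any looseness here directly enlarges $\epsilon'$ and weakens both bounds.

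Finally I would assemble the two bounds by monotonicity. For the upper bound on the loss, substitute the worst-case positive similarity $s_+ = 1-\epsilon^2/2$ and worst-case negative similarities $s_u = +\epsilon'$, yielding
\begin{equation*}
\mathcal{L}_{\text{InfoNCE}}(\mathcal{G},\mathcal{G}') \leq -\log\frac{e^{(1-\epsilon^2/2)/\tau}}{e^{(1-\epsilon^2/2)/\tau}+(n-1)e^{\epsilon'/\tau}}.
\end{equation*}
For the lower bound, use the best-case $s_+=1$ and $s_u=-\epsilon'$ to obtain
\begin{equation*}
\mathcal{L}_{\text{InfoNCE}}(\mathcal{G},\mathcal{G}') \geq -\log\frac{e^{1/\tau}}{e^{1/\tau}+(n-1)e^{-\epsilon'/\tau}},
\end{equation*}
which matches the statement. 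The whole argument thus factors cleanly through Lemmas~\ref{lemma5} and~\ref{lemma6}, with the core message that the bounds depend on $\delta$ only through $\epsilon$ and $\epsilon'$ — a structural fact that the subsequent discussion uses to argue that sophisticated spectral augmentations cannot materially outperform simple edge perturbations for shallow encoders.
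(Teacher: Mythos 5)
Your proposal matches the paper's proof essentially step for step: both obtain the positive-pair bound $s_+ \geq 1 - \epsilon^2/2$ from Lemma~\ref{lemma5} via the polarization identity for unit vectors, the negative-pair bound $|s_u| \leq \epsilon'$ from Lemma~\ref{lemma6} via concentration plus a union bound, and then plug the extremal values of $s_+$ and $s_u$ into the per-anchor InfoNCE expression using its monotonicity (decreasing in $s_+$, increasing in each $s_u$) before averaging over nodes. The only cosmetic difference is that you make the monotonicity argument explicit rather than implicit, and you correctly flag the probabilistic modeling assumption underlying Lemma~\ref{lemma6} as the least robust ingredient.
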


This theoretical result reveals that the InfoNCE loss naturally stays within a narrow interval given a perturbation strength, regardless of augmentation complexity. Such a finding helps explain why sophisticated spectral augmentations may not significantly outperform simple ones in shallow architectures. While the potential benefits of coupling spectral augmentations with deeper GNN architectures remain an open question.

\paragraph{Numerical Estimation and Interpretation.} To illustrate the derived bounds, we provide a numerical estimation of the upper and lower bounds based on realistic parameters for 1-layer GNNs (which is the case for best performance for a bunch of benchmarks presented in Sec. ~\ref{sec:exp} later). As detailed in Appendix ~\ref{app:num_est}, the parameters were chosen using typical graph augmentation settings and realistic assumptions about $\epsilon$ and $\epsilon'$ derived from Lemma ~\ref{lemma5} and Lemma ~\ref{lemma6}. The resulting bounds on the InfoNCE loss are:
Lower bound: 4.7989, Upper bound: 5.4497.
The difference between the two bounds is $5.4497 - 4.7989 = 0.6508$, indicating that the InfoNCE loss remains tightly constrained under these settings. This small interval suggests that shallow GNNs cannot fully utilize complex spectral augmentations, as their expressive capacity limits the potential variation in mutual information captured from augmented views. Our analysis reveals a critical insight: the limited efficacy of spectral augmentation stems from the inability of shallow GNNs to effectively capture and leverage the spectral properties of a graph. Instead, the learning outcomes are more directly influenced by simpler factors, such as the strength of edge perturbations. These findings reinforce the practicality of straightforward augmentation methods like edge dropping and adding, which perform comparably or better in this constrained theoretical setting.

\section{Edge perturbation is all you need}\label{subsec:our_method}

So far, our findings indicate that spectral augmentation is not particularly effective in contrast-based graph self-supervised learning. It may suggest that spectral augmentation essentially amounts to random topology perturbation, based on inconsistencies

in previous studies \cite{lin2023spectral, liu2022revisiting, yang2023augment} and the theoretical insight that a shallow encoder can hardly capture spectral properties. In fact, most of the spectral augmentations are basically performing edge perturbations on the graph with some targeted directions. Since we preliminarily conclude that it is quite difficult for those augmentations to benefit from the spectral properties of graphs, it is very intuitive to hypothesize that edge perturbation itself matters in the learning process.

Consequently, we are turning back to \textbf{E}dge \textbf{P}erturbation (\ours), a more straightforward and proven method for augmenting graph data. The two primary methods of edge perturbation are \de\ and \ade. We want to claim that edge perturbation has a better performance than spectral augmentations and prove empirically that none of them actually or even can benefit much from any spectral information and properties. Also, we demonstrate edge perturbation is much more efficient in practical applications for both time and space sake, where spectral operations are almost infeasible. Overall, we will support the idea with evidence in the following sections that simple edge perturbation is not only good enough but even very optimal in \textbf{CG-SSL} compared to spectral augmentations. 

Edge perturbation involves modifying the topology of the graph by either removing or adding edges at random. We detail the two main types of edge perturbation techniques used in our frameworks: edge dropping and edge adding.

\noindent \textbf{\(\de\).} Edge dropping is the process of randomly removing a subset of edges from the original graph to create an augmented view. Adopting the definition from \cite{Rong2020DropEdge:}, let \( \mathcal{G} = (\mathbf{A}, \mathbf{X}) \) be the original graph with adjacency matrix \( \mathbf{A} \). We introduce a mask matrix \( \mathbf{M} \) of the same dimensions as \( \mathbf{A} \), where each entry \( M_{ij} \) follows a Bernoulli distribution with parameter \( 1 - p \) (denoted as the drop rate). The edge-dropped graph \( G' \) is then obtained by element-wise multiplication of \( \mathbf{A} \) with \( \mathbf{M} \) (where \( \odot \) denotes the Hadamard product):
\begin{equation}
\mathbf{A}' = \mathbf{A} \odot \mathbf{M}
\end{equation}
\noindent \textbf{\(\ade\).} Edge adding involves randomly adding a subset of new edges to the original graph to create an augmented view. Let \( \mathbf{N} \) be an adding matrix of the same dimensions as \( \mathbf{A} \), where each entry \( N_{ij} \) follows a Bernoulli distribution with parameter \( q \) (denoted as the add rate), and \( N_{ij} = 0 \) for all existing edges in \( \mathbf{A} \). The edge-added graph \( \mathcal{G}'' \) is obtained by adding \( \mathbf{N} \) to \( \mathbf{A} \):
\begin{equation}
\mathbf{A}'' = \mathbf{A} + \mathbf{N}
\end{equation}
These two operations ensure that the augmented views \( \mathcal{G}^{(1)} \) and \( \mathcal{G}^{(2)} \) have modified adjacency matrices \( \mathbf{A}' \) and \( \mathbf{A}'' \) respectively, which are used to generate contrastive views while preserving the feature matrix \( \mathbf{X} \).

\subsection{Advantage of edge perturbation over spectral augmentations}

Edge perturbation offers several key advantages over spectral augmentation, making it a more effective and practical choice for \textbf{CG-SSL}. Compared to spectrum-related augmentations, it has three major advantages.

\noindent \textbf{Theoretically intuitive.} Edge perturbation is inherently simpler and more intuitive. It directly modifies the graph's structure by adding or removing edges, which aligns well with the shallow GNN encoders' strength in capturing local neighborhood information. Given that shallow GNNs have a limited receptive field, they are better suited to leveraging the local structural changes introduced by edge perturbation rather than the global changes implied by spectral augmentation.

\noindent \textbf{Significantly better efficiency.} Edge perturbation methods such as edge dropping (\(\de\)) and edge adding (\(\ade\)) are computationally efficient. Unlike spectral augmentation, which requires costly eigenvalue and eigenvector computations, edge perturbation can be implemented with basic graph operations. This efficiency translates to faster training and inference times, making it more suitable for large-scale graph datasets and real-time applications. As shown in Table \ref{tab:complexity}, the time and space complexity of spectrum-related calculations are several orders of magnitude higher than those of simple edge perturbation operations. This makes spectrum-related calculations impractical for large datasets typically encountered in real-world applications.

\begin{table}[htp]
    \centering
    \caption{Time and space complexity of different methods (Empirical Time is on \pub\ dataset)}
    \begin{tabular}{lcccc}
        \toprule
        \textbf{Method} & \textbf{Time Complexity} & \textbf{Space Complexity} & \textbf{Empirical Time (s/epoch)} \\
        \midrule
        Spectrum calculation & $O(n^3)$ & $O(n^2)$ & 26.435 \\
        \de\ & $O(m)$ & $O(m)$ & 0.140 \\
        \ade\ & $O(m)$ & $O(m)$ & 0.159 \\
        \bottomrule
    \end{tabular}
    \label{tab:complexity}
        \vspace{-4.9mm}
\end{table}

\paragraph{Optimal learning performance.} Most importantly and directly, our comprehensive empirical studies indicate that edge perturbation methods lead to significant improvements in model performance, as presented and analyzed in Sec.~\ref{sec:exp}. From the results there, the conclusion can be drawn that the performance of the proposed augmentations is not only better than those of spectral augmentations but also matches or even surpasses the performance of other strong benchmarks.

These advantages position edge perturbation as a robust and efficient method for graph augmentation in self-supervised learning. In the following section, we will present our experimental analysis, demonstrating the accuracy gains achieved through edge perturbation methods.

\section{Experiments on SSL performance} \label{sec:exp}

\subsection{Experimental Settings} \label{subsec:exp:setting}
\noindent \textbf{Task and Datasets.} 
We conducted extensive experiments for node-level classification on seven datasets: \cora, \cs, \pub ~\cite{kipf2016semi}, \ph, \com~\cite{shchur2018pitfalls}, \cocs, and \cophy. These datasets include various types of graphs, such as citation networks, co-purchase networks, and co-authored networks. Note that we do not include huge-scale datasets like OGBN~\cite{hu2021opengraphbenchmarkdatasets} for the high complexity of spectral augmentations. While both \de\ and \ade\ have linear complexity that can easily run on those huge datasets, no spectral augmentation can scale to them. Additionally, we carried out graph-level classification on five datasets from the TUDataset collection~\cite{morris2020tudataset}, which include biochemical molecules and social networks. More details of these datasets be found in Appendix~\ref{appendix:data}.

\noindent \textbf{Baselines.} We conducted experiments under four \textbf{CG-SSL} frameworks: MVGRL, GRACE, G-BT, and BGRL (mentioned in Sec~\ref{sec:pre_study}), using \de, \ade, and \SPAN~\cite{lin2023spectral} as augmentation strategies. Note that there are only three very relevant studies on spectral augmentation strategies of \textbf{CG-SSL} to the authors’ best knowledge, i.e., SPAN, SpCo~\cite{liu2022revisiting} and GASSER~\cite{yang2023augment}. Among them, GASSER does not have open-sourced code so we are not able to reproduce any related results, but we try our best to directly adopt the best performance reported in that study to ensure comparison is possible. Also, SpCo is only applicable to node-level tasks and its implementation is not robust enough to generalize to all the node-level datasets and \textbf{CG-SSL} frameworks. Therefore, we manage to include the results of all the settings that it is feasible to do, which is its original setting and the combination of GRACE and it. Given the infeasibility and inaccessibility of the two, we selected SPAN as a major baseline because it is robust and general enough to all the datasets and experimental settings while allowing the modular plug-and-play integration of edge perturbation methods, enabling a very direct angle to evaluate the effectiveness of the spectral augmentations compared to much simpler alternatives.

Besides the major baselines mentioned above, other related ones are added to clearly and comprehensively benchmark our work. For MVGRL, we also compared its original PPR augmentation. For the node classification task, we use GCA~\cite{zhu2021graph}, GMI~\cite{peng2020graph}, DGI~\cite{velickovic2019deep}, CCA-SSG~\cite{zhang2021canonical} and SpCo~\cite{liu2022revisiting} as baselines. For the graph classification task, we use RGCL~\cite{li2022let} and GraphCL~\cite{you2020graph} as baselines. Detailed experimental configurations are in Appendix~\ref{appendix:data}.

\noindent \textbf{Evaluation Protocol.} We adopt the evaluation and split scheme from previous works~\cite{veličković2018deep,zhang2023spectral,lin2023spectral}. Each GNN encoder is trained on the entire graph with self-supervised learning. After training, we freeze the encoder and extract embeddings for all nodes or graphs. Finally, we train a simple linear classifier using the labels from the training/validation set and test it with the testing set. The accuracy of classification on the testing set shows how good the learned representations are. For the node classification task nodes are randomly divided into $10\% /10 \% /80\% $ for training, validation, and testing, and for graph classification datasets, graphs are randomly divided into $80\% /10 \% /10\% $ for training, validation, and testing.

\subsection{Experimental results}
We present the prediction accuracy of the node classification and graph classification tasks in Table~\ref{main_table:node} and Table~\ref{main_table:graph}, respectively.
Our comprehensive analysis reveals distinct patterns in the effectiveness of different augmentation strategies across these two task types. For node classification, \de\ consistently achieves the best performance across multiple datasets and \textbf{CG-SSL} frameworks, demonstrating superior robustness and consistency. While \ade\ also achieves competitive accuracy, \de\ stands out in this area. In graph classification, \ade\ frequently achieves the best performance across multiple datasets and \textbf{CG-SSL} frameworks, showing superior and more consistent results.  The effectiveness of \ade\ in graph classification may be attributed to 'beneficial oversmoothing' as proposed by \cite{southern2024understanding}. In graph-level tasks, the convergence of node features to a common representation aligned with the global output can be advantageous. By potentially increasing graph density and the proportion of positively curved edges \cite{nguyen2023revisiting}, \ade\ might facilitate this beneficial effect in graph classification tasks. Notably, all the results from \SPAN\ as well as GASSER and SpCo generally underperform relative to both \de\ and \ade\, while also encountering scalability issues on larger datasets and suffering from a high overhead of training time.

\begin{table}[tp!]
    \centering     \caption{Node classification. Results of baselines with '$\dagger$' are adopted directly from previous works. MVGRL+PPR is the original setting of MVGRL. The best results in each cell are highlighted in \colorbox{light-gray}{grey}. The best results overall are highlighted with \textbf{\underline{bold and underline}}. Metric is accuracy (\%).}
    \resizebox{1\textwidth}{!}{
    \begin{tabular}{c|ccccccc}
        \toprule
        Model & \cora & \cs & \pub & \ph & \com & \cocs & \cophy \\
        \midrule

        GCA$^{\dagger}$ & \cellcolor{light-gray} 83.67 $\pm$ 0.44 & 71.48 $\pm$ 0.26 & 78.87 $\pm$ 0.49 & \cellcolor{light-gray}92.53 $\pm$ 0.16 & \cellcolor{light-gray}88.94 $\pm$ 0.15  & \cellcolor{light-gray}93.10 $\pm$ 0.01 & 95.68 $\pm$ 0.05 \\
        GMI$^{\dagger}$& 83.02 $\pm$ 0.33 & \cellcolor{light-gray}72.45 $\pm$ 0.12 & \cellcolor{light-gray}79.94 $\pm$ 0.25 &  90.68 $\pm$ 0.17 &82.21 $\pm$ 0.31 & 91.08 $\pm$ 0.56  & \textemdash{} \\
         DGI$^{\dagger}$ & 82.34 $\pm$ 0.64 & 71.85 $\pm$ 0.74 & 76.82 $\pm$ 0.61 & 91.61 $\pm$ 0.22 & 83.95 $\pm$ 0.47 & 92.15 $\pm$ 0.63 & 94.51 $\pm$ 0.52\\
        CCA-SSG$^{\dagger}$ & 84.20 $\pm$ 0.40 & 73.10 $\pm$ 0.30 & 81.60 $\pm$ 0.40 & 93.14 $\pm$ 0.14 & 88.74 $\pm$ 0.28 & 93.31 $\pm$ 0.22 & 95.38 $\pm$ 0.06\\
        \midrule
        SpCo & 83.78 $\pm$ 0.70 & 71.82 $\pm$ 1.26 & 80.86 $\pm$ 0.43 & \textemdash{}& \textemdash{}& \textemdash{}& \textemdash{}\\

        GASSER$^{\dagger}$ & \cellcolor{light-gray}85.27 $\pm$ 0.10 & \cellcolor{light-gray}75.41 $\pm$ 0.84 & \cellcolor{light-gray}83.00 $\pm$ 0.61 &\cellcolor{light-gray} 93.17 $\pm$ 0.31 & \cellcolor{light-gray}88.67 $\pm$ 0.15 & \textemdash{} & \textemdash{} \\
        \midrule

        MVGRL + PPR & 83.53 $\pm$ 1.19 &  71.56 $\pm$  1.89 &
         84.13 $\pm$ 0.26 & 88.47 $\pm$ 1.02 &  89.84 $\pm$ 0.12 & 90.57 $\pm$ 0.61
         & OOM  \\

        MVGRL + \de  & 84.31 $\pm$ 1.95 &  \cellcolor{light-gray}74.85 $\pm$  0.73 &
         \cellcolor{light-gray}85.62 $\pm$ 0.45 & 89.28 $\pm$ 0.95 & \cellcolor{light-gray}\textbf{\underline{90.43 $\pm$ 0.33}}
          & \cellcolor{light-gray}93.20 $\pm$ 0.81 & 
         \cellcolor{light-gray}95.70 $\pm$ 0.28  \\

        MVGRL + \ade  & 83.21 $\pm$ 1.65 &  73.65 $\pm$  1.60 &
         84.86 $\pm$ 1.19 & 87.15 $\pm$ 1.36 & 87.59 $\pm$ 0.53
          & 92.91 $\pm$ 0.65 & 
         95.33 $\pm$ 0.23  \\

        MVGRL +\SPAN & \cellcolor{light-gray}84.57 $\pm$ 0.22& 73.65 $\pm$ 1.29& 85.21 $\pm$ 0.81&\cellcolor{light-gray}92.33 $\pm$ 0.99 & 88.75 $\pm$ 0.20& 92.25 $\pm$ 0.76&OOM\\ %
        
        MVGRL + GASSER$^{\dagger}$ & 80.36 $\pm$ 0.05 & 74.48 $\pm$ 0.73 & 80.80 $\pm$ 0.19 & \textemdash{} & \textemdash{} & \textemdash{} &\textemdash{} \\

        \midrule

        G-BT + \de  & \cellcolor{light-gray}\textbf{\underline{86.51 $\pm$ 2.04}} & \cellcolor{light-gray}72.95 $\pm$ 2.46  &  \cellcolor{light-gray}87.10 $\pm$ 1.21 &   \cellcolor{light-gray}93.55 $\pm$ 0.60 & 88.66 $\pm$ 0.46 & \cellcolor{light-gray}\textbf{\underline{93.31 $\pm$ 0.05}} &  \cellcolor{light-gray}\textbf{\underline{96.06 $\pm$ 0.24}} \\

        G-BT + \ade  & 82.10 $\pm$ 1.48 & 66.36 $\pm$ 4.25  &  85.98 $\pm$ 0.81 &   93.68 $\pm$ 0.79 & \cellcolor{light-gray}87.81 $\pm$ 0.79 & 91.98 $\pm$ 0.66 &  95.51 $\pm$ 0.02 \\

        G-BT + \SPAN &84.06 $\pm$ 2.85&67.46 $\pm$ 3.18&85.97 $\pm$ 0.41& 91.85 $\pm$ 0.22&88.73 $\pm$ 0.62 & 92.63 $\pm$ 0.07&OOM\\

        \midrule

        GRACE + \de  & 84.19 $\pm$ 2.07 & \cellcolor{light-gray}\textbf{\underline{75.44 $\pm$ 0.32}} & \cellcolor{light-gray}\textbf{\underline{87.84 $\pm$ 0.37}} & 92.62 $\pm$ 0.73 & 86.67 $\pm$ 0.61& \cellcolor{light-gray}93.15 $\pm$ 0.23 & OOM\\

        GRACE + \ade & \cellcolor{light-gray}85.78 $\pm$ 0.62 &
        71.65 $\pm$ 1.63&
        85.25 $\pm$ 0.47&
        89.93 $\pm$ 0.74&
        76.74 $\pm$ 0.57&
        92.46 $\pm$ 0.25&
        OOM\\

        GRACE + \SPAN& 82.84 $\pm$ 0.91 & 67.76 $\pm$ 0.21& 85.11 $\pm$ 0.71& \cellcolor{light-gray}\textbf{\underline{93.72 $\pm$ 0.21}}&\cellcolor{light-gray}88.71 $\pm$ 0.06&91.72 $\pm$ 1.75& OOM\\

        GRACE + GASSER$^{\dagger}$ & 84.10 $\pm$ 0.26 & 74.47 $\pm$ 0.64 & 83.97 $\pm$ 0.52 & \textemdash{} & \textemdash{} & \textemdash{} &\textemdash{}  \\
        GRACE + SpCo & 81.61 $\pm$ 0.75 & 70.83 $\pm$ 1.47 & 84.97 $\pm$ 1.13 & \textemdash{} & \textemdash{} & \textemdash{} &\textemdash{} \\
        
        \midrule

        BGRL + \de  & 83.21 $\pm$ 3.29 & \cellcolor{light-gray}71.46 $\pm$ 0.56 & \cellcolor{light-gray}86.28 $\pm$ 0.13 & \cellcolor{light-gray}92.90 $\pm$ 0.69 & \cellcolor{light-gray}88.68 $\pm$ 0.65 & 91.58 $\pm$ 0.18 & \cellcolor{light-gray}95.29 $\pm$ 0.19\\

        BGRL + \ade  & 81.49 $\pm$ 1.21 & 69.66 $\pm$ 1.34 & 84.54 $\pm$ 0.22 & 91.85 $\pm$ 0.75 & 86.75 $\pm$ 1.15 & 91.78 $\pm$ 0.77 & 95.29 $\pm$ 0.09\\

        BGRL + \SPAN &\cellcolor{light-gray}83.33 $\pm$ 0.45& 66.26 $\pm$ 0.92 &85.97 $\pm$ 0.41& 91.72 $\pm$ 1.75 & 88.61 $\pm$ 0.59& \cellcolor{light-gray}92.29 $\pm$ 0.59 &OOM\\
        \bottomrule
    \end{tabular}}
    \label{main_table:node}
    \vspace{-3.9mm}
\end{table}

\begin{table}[t]
    \centering  \caption{Graph classification.  Results of baselines with '$\dagger$' are adopted directly from previous works. MVGRL+PPR is the original setting of MVGRL. The best results in each cell are highlighted in \colorbox{light-gray}{grey}. The best results overall are highlighted with \textbf{\underline{bold and underline}}. Metric is accuracy (\%).}\resizebox{0.8\textwidth}{!}{
    \begin{tabular}{c|ccccc}
        \toprule
        Model & \mutag & \pt & \nci  & \bi & \multi   \\
        \midrule
        GraphCL$^{\dagger}$ & 86.80 $\pm$ 1.34 & 74.39 $\pm$ 0.45 & 77.87 $\pm$ 0.41 & 71.14 $\pm$ 0.44 & 48.58 $\pm$ 0.67\\
        RGCL$^{\dagger}$ & \cellcolor{light-gray}87.66 $\pm$ 1.01 & \cellcolor{light-gray}75.03 $\pm$ 0.43 & \cellcolor{light-gray}78.14 $\pm$ 1.08 & \cellcolor{light-gray}71.85 $\pm$ 0.84 &\cellcolor{light-gray}49.31 $\pm$ 0.42\\

        \midrule
        MVGRL + \ppr & 90.00 $\pm$ 5.40 & 78.92 $\pm$ 1.83&\cellcolor{light-gray}\textbf{\underline{78.78 $\pm$ 1.52}}& 71.40 $\pm$ 4.17 & \cellcolor{light-gray}\textbf{\underline{52.13 $\pm$ 1.42}} \\
        MVGRL+ \SPAN &93.33 $\pm$ 2.22 &  79.81 $\pm$ 2.45 & 77.56 $\pm$ 1.77& 75.00 $\pm$ 1.09 & 51.20 $\pm$ 1.62\\
        
        MVGRL+ \de  & 93.33 $\pm$ 2.22 & 78.92 $\pm$ 1.33 & 77.81  $\pm$ 1.50& \cellcolor{light-gray}\textbf{\underline{76.40 $\pm$ 0.48}} & 51.46 $\pm$ 3.02\\
        MVGRL+ \ade & \cellcolor{light-gray}\textbf{\underline{94.44 $\pm$ 3.51}}& \cellcolor{light-gray}81.25 $\pm$ 3.43& 77.27 $\pm$ 0.71 & 74.00 $\pm$ 2.82 & 51.73 $\pm$ 2.43\\

        \midrule
        G-BT + \SPAN & 90.00 $\pm$ 6.47&\cellcolor{light-gray}\textbf{\underline{80.89 $\pm$ 3.22}} &\cellcolor{light-gray}78.29 $\pm$ 1.12 & 65.60 $\pm$ 1.35& 45.60 $\pm$ 2.13\\
        G-BT + \de  & \cellcolor{light-gray}92.59 $\pm$ 2.61& 77.97 $\pm$ 0.42& 78.18 $\pm$ 0.91 & \cellcolor{light-gray}73.33 $\pm$ 1.24 & \cellcolor{light-gray}49.11 $\pm$ 1.25\\ 
        G-BT + \ade &\cellcolor{light-gray}92.59 $\pm$ 2.61 &80.64 $\pm$ 1.68 & 75.91 $\pm$ 0.59 & \cellcolor{light-gray}73.33 $\pm$ 1.24 & 48.88 $\pm$ 1.13\\
        \midrule

        GRACE + \SPAN  & 90.00 $\pm$ 4.15 & 79.10 $\pm$ 2.30&78.49 $\pm$ 0.79& 70.80 $\pm$ 3.96 & 47.73 $\pm$ 1.71\\
        GRACE + \de  & 88.88 $\pm$ 3.51 & 78.21 $\pm$ 1.92 &76.93 $\pm$ 1.14& 71.00 $\pm$ 3.75 & 47.46 $\pm$ 3.02\\
        GRACE + \ade  &\cellcolor{light-gray}92.22 $\pm$ 4.44 & \cellcolor{light-gray}80.17 $\pm$ 2.21 & \cellcolor{light-gray}79.97 $\pm$ 2.35 & \cellcolor{light-gray}71.67 $\pm$ 2.36 & \cellcolor{light-gray}49.86 $\pm$ 4.09\\

        \midrule
        BGRL + \SPAN & 90.00 $\pm$ 4.15& 79.28 $\pm$ 2.73 &\cellcolor{light-gray}78.05 $\pm$ 1.62& 72.40 $\pm$ 2.57 &47.46 $\pm$ 4.35\\
        BGRL + \de & 88.88 $\pm$ 4.96 & 76.60 $\pm$ 2.21 & 76.15 $\pm$ 0.43 & 71.60 $\pm$ 3.31 & \cellcolor{light-gray}51.47 $\pm$ 3.02\\
        BGRL + \ade & \cellcolor{light-gray}91.11 $\pm$ 5.66 & \cellcolor{light-gray}79.46 $\pm$ 2.18 & 76.98 $\pm$ 1.40 & \cellcolor{light-gray}72.80 $\pm$ 2.48 & 47.77 $\pm$ 4.18\\

        \bottomrule
    \end{tabular}}
    \label{main_table:graph}
    \vspace{-2.9mm}
\end{table}

\subsection{Ablation Study}

To validate our findings, we conducted a series of ablation experiments on two exemplar datasets, \cora\ and \mutag, representing node- and graph-level tasks, respectively. These ablation studies are crucial to rule out potential confounding variables, such as model architectures and hyperparameters, ensuring that our conclusions about the performance of \textbf{CG-SSL} are robust and comprehensive.

\noindent \textbf{Number of Layers of GCN Encoder.}
To assess the impact of model depth, we conducted both node-level and graph-level experiments using varying numbers of GCN encoder layers. This analysis is to rule out the possibility that model depth, rather than augmentation strategies, influences the claim. As expected, the results, detailed in Appendix~\ref{app:layers}, show that deeper encoders generally lead to worse performance. This suggests that excessive model complexity may introduce noise or overfitting, diminishing the benefits of spectral information. Therefore, our conclusion still holds tightly.

\noindent \textbf{Type of GNN Encoder.}
While we initially selected GCN to align with the common protocols in previous studies for a fair comparison, we also explored other GNN architectures to ensure our findings are not specific to GCN alone. To further validate our results, we conducted additional experiments using GAT~\cite{veličković2018deep} for both node- and graph-level tasks, as well as GPS~\cite{rampavsek2022gps} for the graph-level task. As reported in Appendix~\ref{app:gnn_encoder}, the performance trends observed with GAT and GPS are consistent with those obtained using GCN. This consistency across different encoder types further supports our conclusion that simple edge perturbation strategies are sufficient, and that spectral augmentation does not significantly enhance performance, regardless of the type of GNN encoder applied.

\section{The insignificance of Spectral Cues}

Given the superior empirical performance of edge perturbations mentioned in Sec. \ref{sec:exp}, one may still argue whether it is a result of some spectral cues or not, as all the analyses mentioned are not direct evidence of the insignificance of the spectral information in the study. To clarify this, we have three questions to answer, \textbf{(1)} Can GNN encoders learn spectral information from augmented graphs produced edge perturbations? \textbf{(2)} Are spectrum in spectral augmentation necessary? \textbf{(3)} Is spectral information statistically a significant factor in the performance of edge perturbation?
Given the questions, we conduct a series of experimental studies to answer them respectively in Sec. \ref{subsec:conv_spec}, \ref{subsec:spec_perturb} and Appendix \ref{app:relation}.

\subsection{Degeneration of the spectrum after Edge Perturbation (\ours)} \label{subsec:conv_spec}

Here we want to conduct studies to answer the question of whether the GNN encoders applied can learn spectral information from the augmented graph views produced by \ours. Therefore, we collect the spectrum of all augmented graphs ever produced along the way of the contrastive learning process of the best framework with the optimal parameter we have in this study, i.e., G-BT + \ours\ with best drop rate $p$ or add rate $q$, and calculate the average one for each representative dataset in this study for both node- and graph-level tasks. We find that though the average spectrum of those original graphs is strikingly different, that of augmented graphs is quite similar for node- and graph-level tasks, respectively. This indicates a certain degree of degeneration of the spectra as they are no longer easy to separate after \ours. Therefore, GNN encoders can hardly learn spectral information and properties between different original graphs from those augmented graph views. Note that, though we have defined some context of frameworks, this result is generally only dependent on the augmentation methods. We will elaborate on both the node-level and graph-level results in this section.

\begin{figure}[ht!]
    \centering
    \begin{subfigure}[b]{0.45\textwidth}
        \centering
        \includegraphics[width=\textwidth]{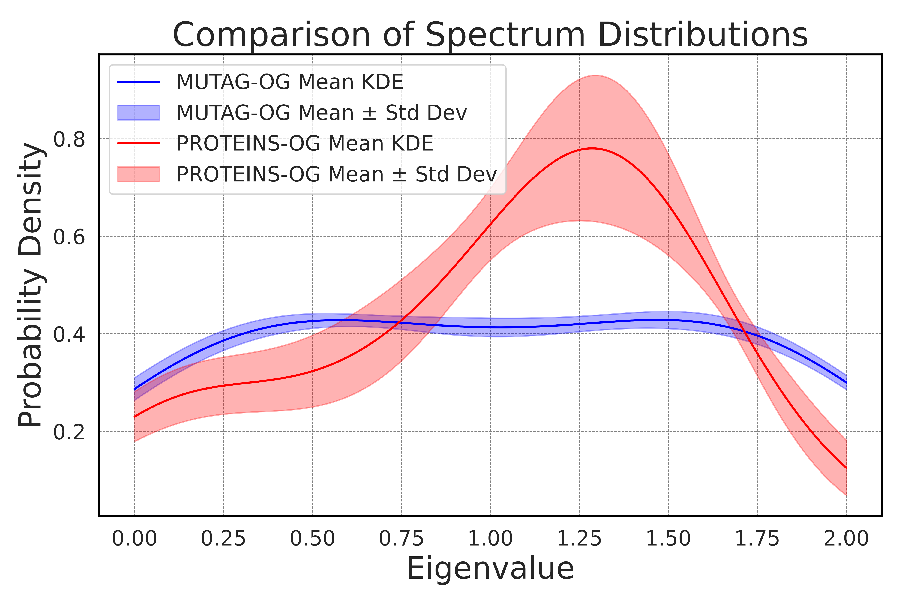}
        \caption{Average spectra of original graphs}
        \label{fig:graph_sub1}
    \end{subfigure}
    \hfill
    \begin{subfigure}[b]{0.45\textwidth}
        \centering
        \includegraphics[width=\textwidth]{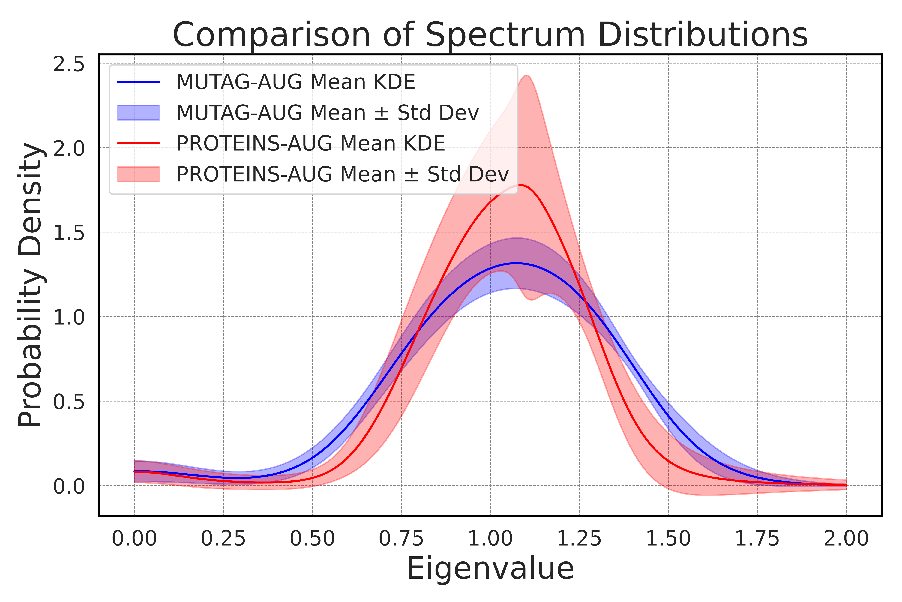}
        \caption{Average spectra of augmented graphs}
        \label{fig:graph_sub2}
    \end{subfigure}
    \hfill
    \begin{subfigure}[b]{0.45\textwidth}
        \centering
        \includegraphics[width=\textwidth]{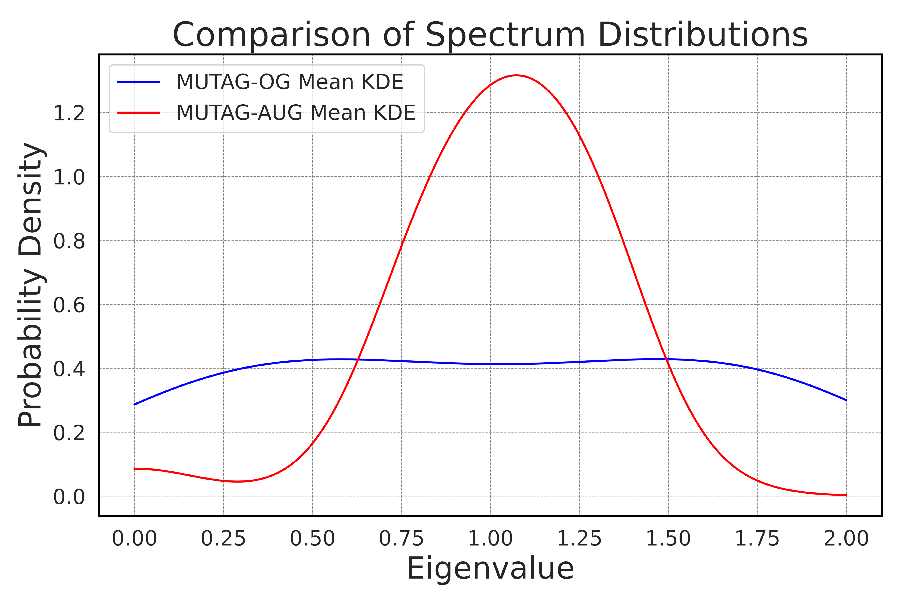}
        \caption{The Spectrum of \mutag}
        \label{fig:graph_sub3}
    \end{subfigure}
    \hfill
    \begin{subfigure}[b]{0.45\textwidth}
        \centering
        \includegraphics[width=\textwidth]{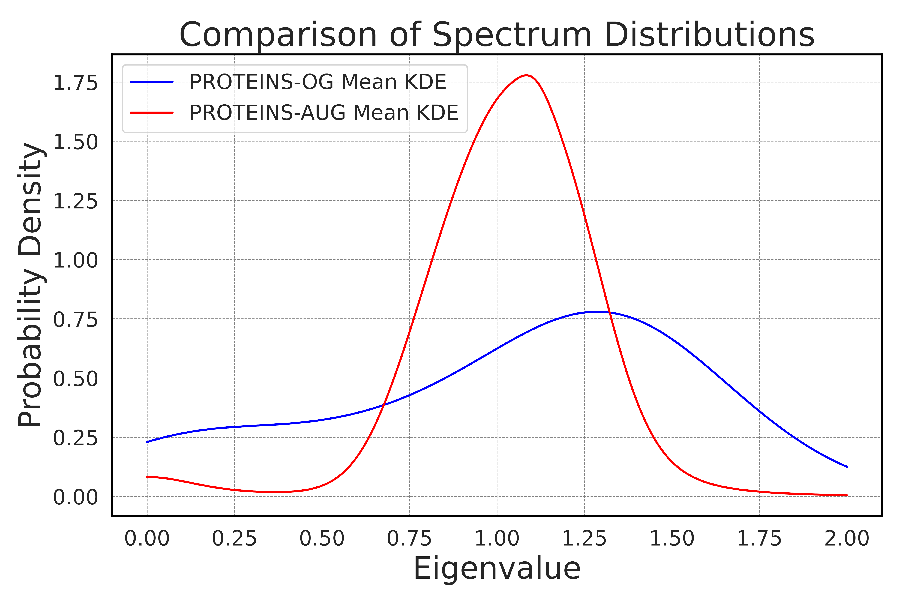}
        \caption{The Spectrum of \pt}
        \label{fig:graph_sub4}
    \end{subfigure}
    
    \caption{The spectrum distributions of graphs on different graph classification datasets. MUTAG and PROTEINS are chosen as they are well representative of all the node classification datasets. OG means original graph and AUG means augmented graph. The augmentation method is \ade\ with the best parameter on G-BT method.}
    \label{fig:graph_spectrum}
\end{figure}

\textbf{Node-Level Analysis.} Here, we visualize the distributions of the average spectrum of graphs at the node level using histograms. The spectral distribution for each graph is represented by a sorted vector of its eigenvalues. When referring to the average spectrum, we mean the average over the eigenvalue vectors of each augmented graph. We plot the histograms of different spectra, normalized to show the probability density. Note that eigenvalues are constrained within the range [0, 2], as we adopted the commonly used symmetrical normalization. We analyze the spectral distributions of three node classification datasets: \cora, \cs, and \com. We compare the average spectral properties of both original and augmented graphs. The augmentation method used is \de, applied with optimal parameters identified for the G-BT method. The results of the visualization are presented in Fig.~\ref{fig:node_spectrum}. By comparing the spectrum distributions of original graphs for the datasets in Fig.~\ref{fig:node_sub1}, we can easily distinguish the spectra of the three datasets. This contrasts with the highly overlapped average spectra of all the datasets, indicating the degeneration mentioned. To support this claim, we also present the comparison of the spectra of original and augmented graphs on all three datasets in Fig.~\ref{fig:node_sub3}, \ref{fig:node_sub4}, and \ref{fig:node_sub5}, respectively, to show the obvious changes after the edge perturbations.

\textbf{Graph-Level analysis.} For graph-level analysis, we basically follow the settings mentioned above in node-level one. The only difference from the node-level task is that we have multiple original graphs with various numbers of nodes, leading to the inconsistent dimensions of the vector of the eigenvalues. Therefore, to provide a more detailed comparison of spectral properties at the graph level, we employ Kernel Density Estimation (KDE)~\cite{parzen1962estimation} to interpolate and smooth the distributions of eigenvalues. We compare two groups of graph spectra. Each group's spectra are processed to compute their KDEs, and the mean and standard deviation of these KDEs are calculated.

We analyze the spectral distributions of two node classification datasets: \mutag\ and \pt. We compare the average spectral properties of both original and augmented graphs. The augmentation method used is \ade\ as it is the better among two \ours\ methods, applied with optimal add rate identified for the G-BT method.

Like the results in node-level analysis, in Fig. \ref{fig:graph_sub1} and \ref{fig:graph_sub2}, we witness the obvious difference between the average spectra of original graphs while the significant overlap between those of augmented graphs, especially if pay attention to the overlapping of the area created by the standard deviation of KDEs. Again, this contrast is not trivial because of the striking mismatch between the average spectra of original and augmented graphs in both datasets, as presented in Fig. \ref{fig:graph_sub3} and \ref{fig:graph_sub4}.

\begin{figure}[h]
    \centering
    \begin{subfigure}[b]{0.45\textwidth}
        \centering
        \includegraphics[width=\textwidth]{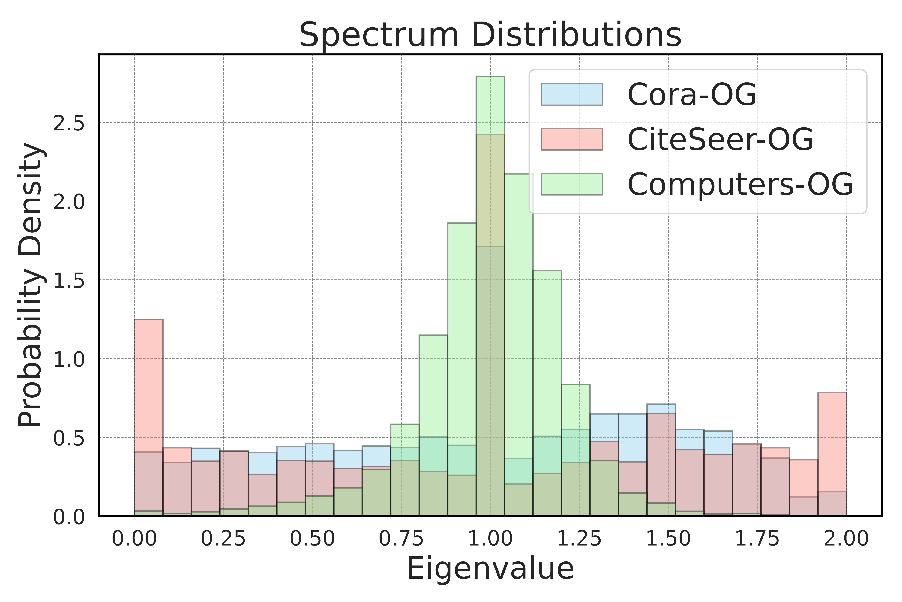}
        \caption{Spectrum of original graphs}
        \label{fig:node_sub1}
    \end{subfigure}
    \hfill
    \begin{subfigure}[b]{0.45\textwidth}
        \centering
        \includegraphics[width=\textwidth]{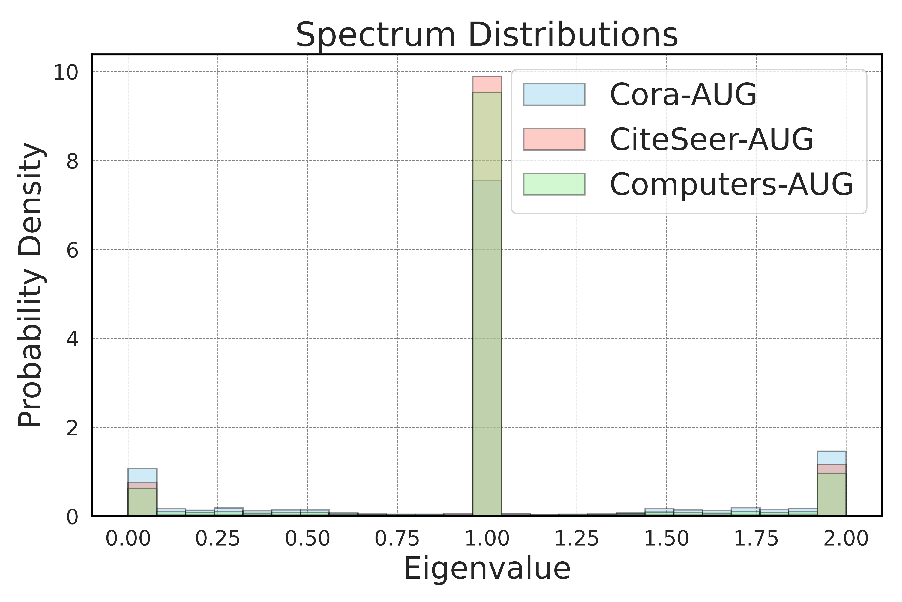}
        \caption{Spectrum of augmented graphs}
        \label{fig:node_sub2}
    \end{subfigure}
    \vskip\baselineskip
    \begin{subfigure}[b]{0.3\textwidth}
        \centering
        \includegraphics[width=\textwidth]{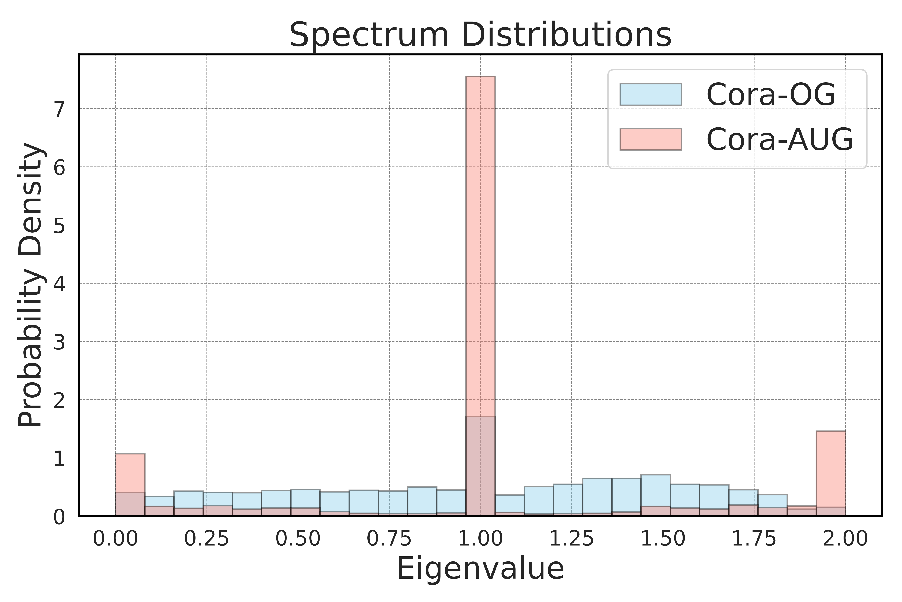}
        \caption{Comparison on \cora}
        \label{fig:node_sub3}
    \end{subfigure}
    \hfill
    \begin{subfigure}[b]{0.3\textwidth}
        \centering
        \includegraphics[width=\textwidth]{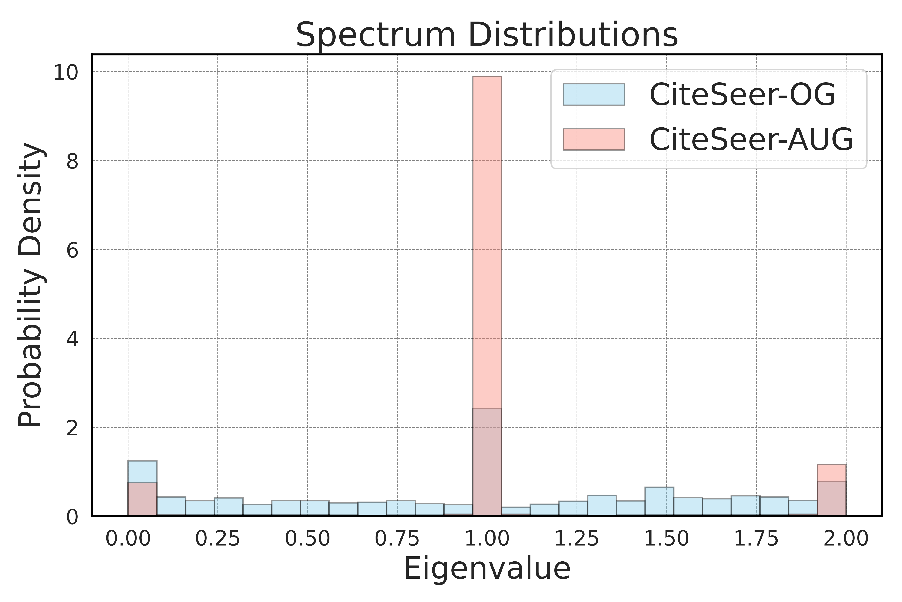}
        \caption{Comparison on \cs}
        \label{fig:node_sub4}
    \end{subfigure}
    \hfill
    \begin{subfigure}[b]{0.3\textwidth}
        \centering
        \includegraphics[width=\textwidth]{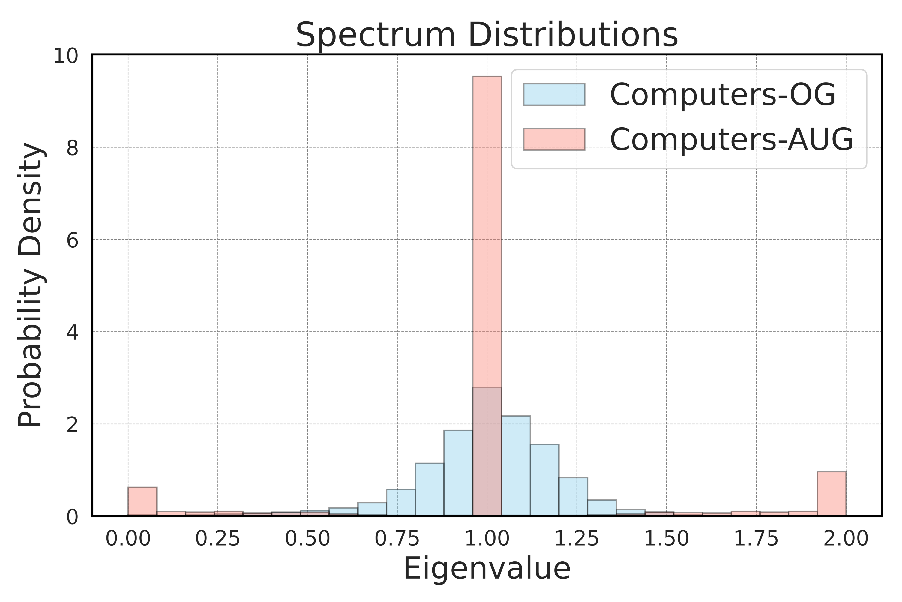}
        \caption{Comparison on \com}
        \label{fig:node_sub5}
    \end{subfigure}
    \caption{The spectrum distributions of graphs on different node classification datasets. \cora, \cs, and \com\ are chosen as they are well representative of all the node classification datasets. OG means original graph and AUG means average augmented graphs. The augmentation method is \de\ with the best parameter on G-BT method.}
    \label{fig:node_spectrum}\vspace{-3.9mm}
\end{figure}

\subsection{Spectral Perturbation} \label{subsec:spec_perturb}
To further destruct the spectral properties from model performance, we introduce \textit{Spectral Perturbation Augmentor} (SPA) for finer-grained anatomy. SPA performs random edge perturbation with an empirically negligible ratio $r_{SPA}$ to transform the input graph $\mathcal{G}$ into a new graph $\mathcal{G}_{SPA}$, such that $\mathcal{G}$ and $\mathcal{G}_{SPA}$ are close to each other topologically, while being divergent in the spectral space. The spectral divergence $d_{SPA}$ between $\mathcal{G}$ and $\mathcal{G}_{SPA}$ is measured by the $L_2$-distance of the respective spectra.
With properly chosen hyperparameters $r_{SPA}$ and $d_{SPA}$, we view the augmented graph $\mathcal{G}_{SPA}$ as a doppelganger of $\mathcal{G}$ that preserves most of the graph-proximity, with only spectral information eliminated.

\textbf{Spectral perturbation on spectral augmentation baselines.} SPAN, being a state-of-the-art spectral augmentation algorithm, demonstrated the correlation between graph spectra and model performance through designated perturbation on spectral priors. However, the effectiveness of simple edge perturbation motivated us to further investigate whether such a relationship is causational.

Specifically, for each pair of SPAN augmented graphs $\mathcal{G}^1, \mathcal{G}^2$, we further augment them into $\mathcal{G}^1_{SPA}, \mathcal{G}^2_{SPA}$ with our proposed SPA augmentor. The SPA-augmented training is performed under the same setup as SPAN, with graphs being SPA-augmented graphs $\mathcal{G}_{SPA}$. Results in Fig~\ref{fig:span_pertu} show that the effectiveness of graph augmentation can be preserved and, in some cases improved, even if the spectral information is destroyed. 

SPAN, along with other spectral augmentation algorithms, can be formulated as an optimization on a parameterized $2$-step generative process:
\begin{equation}
\label{SPAN_GEN}
    \begin{aligned}
        s_{SPAN} \sim p_{\theta}\left( \boldsymbol{S}_{SPAN} \left.\right| \boldsymbol{\mathcal{G}}_0 \right), \qquad
        \mathcal{G}_{SPAN} \sim p_{\phi} \left( \boldsymbol{\mathcal{G}}_{SPAN} \left.\right| \boldsymbol{S}_{SPAN} \right)
    \end{aligned}
\end{equation}
Given the property that $\mathcal{G}_{SPA}$ is topologically close to $\mathcal{G}_{SPAN}$ and the performance function $\text{P} = f\left(\mathcal{G}\right), \lim_{\mathcal{G} \rightarrow \mathcal{G}_{SPAN}} \text{P}\left(\mathcal{G}\right) = \text{P}\left(\mathcal{G}_{SPAN}\right)$, which indicates the continuity around $\mathcal{G}_{SPAN}$, we make a reasonable assertion that $\mathcal{G}_{SPA}$ comes from the same distribution as $\mathcal{G}_{SPAN}$. However, with their spectral space being enforced to be distant, $\mathcal{G}_{SPA}$ is almost impossible to be sampled from the same spectral augmentation generative process:
\begin{equation}
    \begin{aligned}
        d_{SPA} \rightarrow \infty \implies p_{\theta} \left( s_{SPA} \left.\right| \boldsymbol{\mathcal{G}}_0 \right) \rightarrow 0 
        \implies p_{\theta, \phi} \left( \mathcal{G}_{SPA} \left.\right| \boldsymbol{\mathcal{G}}_0 \right) \rightarrow 0
    \end{aligned}
\end{equation}

Although the constrained generative process in Eq.~\ref{SPAN_GEN} does indicate some extent of causality between spectral distribution $\boldsymbol{S}$ and the spectral-augmented graph distribution $\boldsymbol{\mathcal{G}}_{SPAN}$, our experiment challenges a more essential and fundamental aspect of such reasoning: such causality exists upon pre-defined generative processes, which does not intrinsically exist in the graph distributions. Even worse, such constrained generative process is incapable of modeling the full distribution of $\boldsymbol{\mathcal{G}}_{SPAN}$ itself. In our experiment setup, all $\mathcal{G}_{SPA}$ serve as strong counter examples.

\begin{figure}[h]
    \centering
    \begin{subfigure}[b]{0.46\textwidth}
        \centering
        \includegraphics[width=\textwidth]{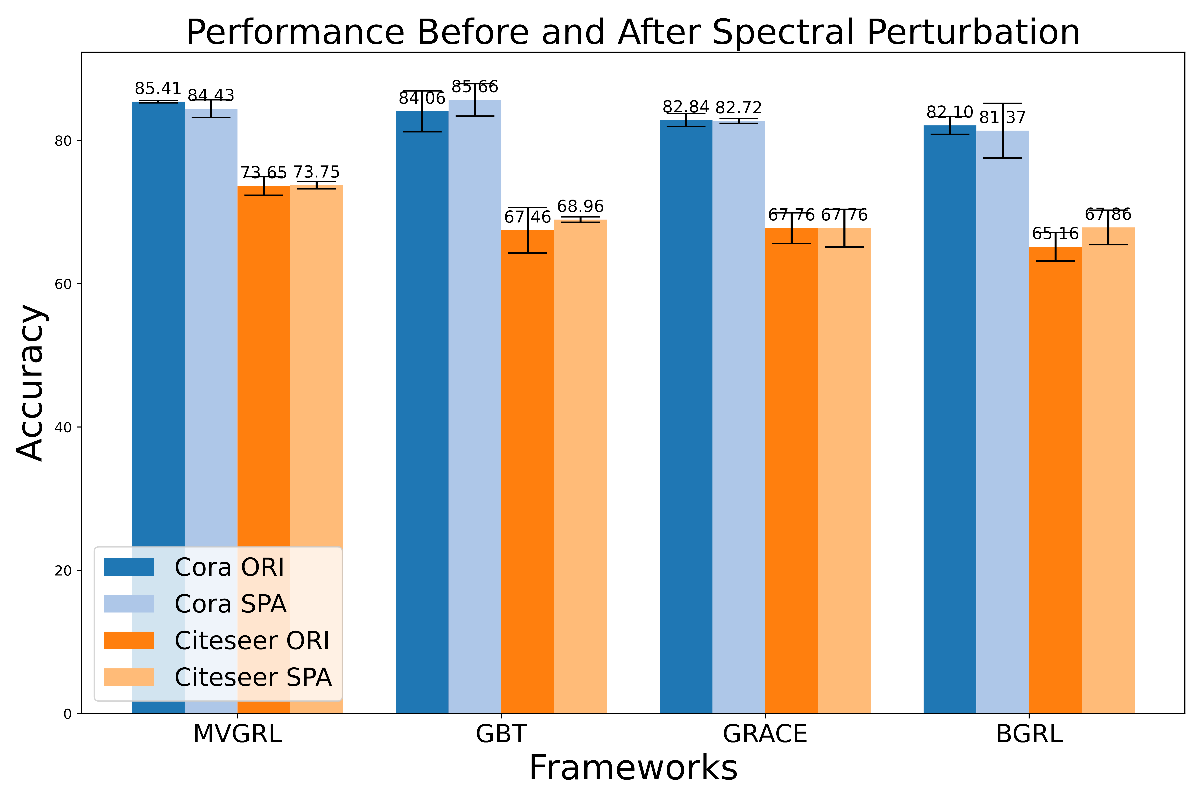}
        \caption{Node classification}
        \label{fig:span_pertu1}
    \end{subfigure}
    \begin{subfigure}[b]{0.46\textwidth}
        \centering        \includegraphics[width=\textwidth]{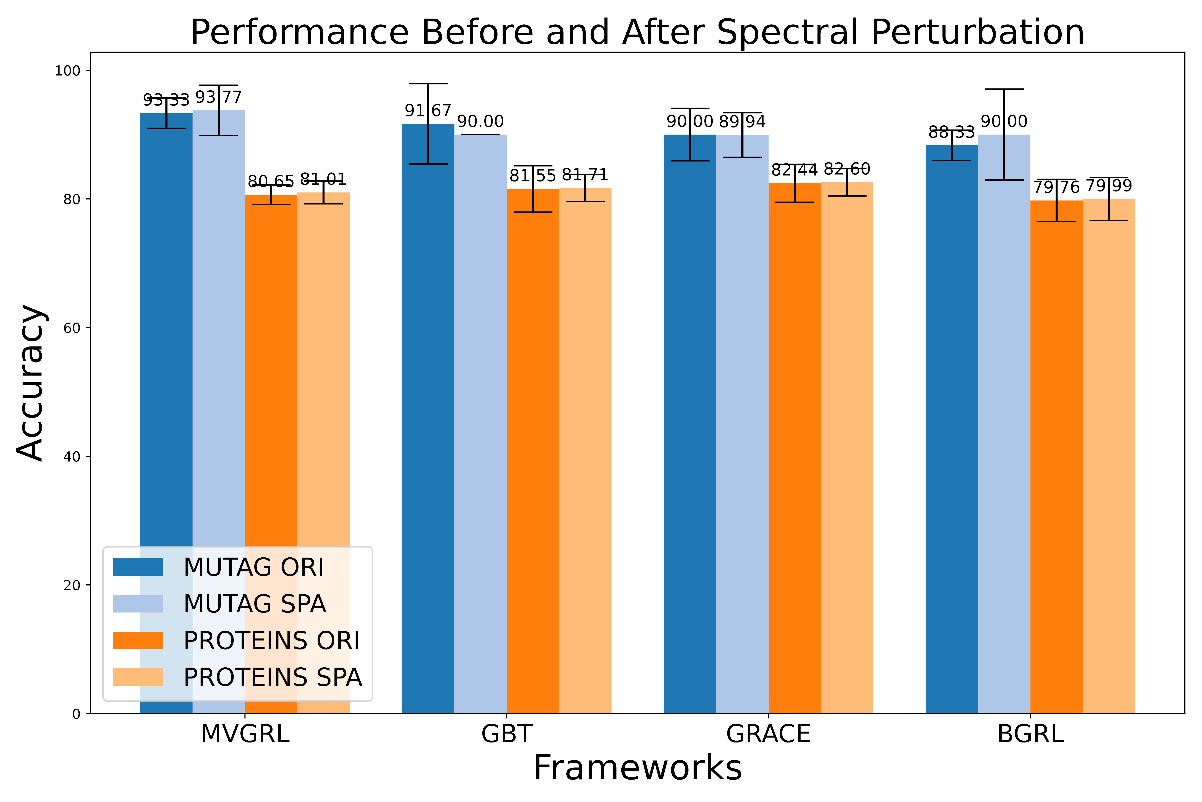}
        \caption{Graph classification}
        \label{fig:span_pertu2}
    \end{subfigure}
    \caption{Comparison of SPAN performance before and after applying SPA. After severely disrupting the spectral, the performance of SPAN is still comparable to that of the original version.}
    \label{fig:span_pertu}
\end{figure}

\section{Conclusion}
In this study, we investigate the effectiveness of spectral augmentation in contrast-based graph self-supervised learning (\textbf{CG-SSL}) frameworks to answer the question: \emph{Are spectral augmentations necessary in \textbf{CG-SSL}}? Our findings indicate that spectral augmentation does not significantly enhance learning efficacy. Instead, simpler edge perturbation techniques, such as random edge dropping for node-level tasks and random edge adding for graph-level tasks, not only compete well but often outperform spectral augmentations. To be specific, we demonstrate that the benefits of spectral augmentation diminish with shallower networks, and edge perturbations yield superior performance in both node- and graph-level classification tasks. Also, GNN encoders struggle to learn spectral information from augmented graphs, and perturbing edges to alter spectral characteristics does not degrade model performance. Furthermore, our theoretical analysis (Theorem ~\ref{theorem:tighter_infonce_bounds}) reveals that the InfoNCE loss bounds the mutual information achievable by augmentations, highlighting that the relatively limited direct contribution of spectral augmentations compared to simpler edge perturbations, especially when in shallow GNNs. These results challenge the current emphasis on spectral augmentation, advocating for more straightforward and effective edge perturbation techniques in \textbf{CG-SSL}, potentially refining the understanding and implementation of graph self-supervised learning.

\bibliography{main}
\bibliographystyle{plain}

\appendix

\clearpage
\tableofcontents

\section{Dataset and training configuration}\label{appendix:data}
\textbf{Datasets.} The node classification datasets used in this paper include the \cora, \cs, and \pub\ citation networks~\cite{kipf2016semi}, as well as the \ph\ and \com\ co-purchase networks~\cite{shchur2018pitfalls}. Additionally, we use the \cocs\ and \cophy\ co-author relationship networks. The statistics of node-level datasets are present in Table~\ref{tab:node_datasets}. The graph classification datasets include: The \mutag\ dataset, which features seven types of graphs derived from 188 mutagenic compounds; the \nci\ dataset, which contains compounds tested for their ability to inhibit human tumor cell growth; the \pt\ dataset, where nodes correspond to secondary structure elements connected if they are adjacent in 3D space; and the \bi\ and \multi\ movie collaboration datasets, where graphs depict interactions among actors and actresses, with edges denoting their collaborations in films. These movie graphs are labeled according to their genres. The statistics of graph-level datasets are present in Table~\ref{tab:graph_datasets}. All datasets can be accessed through PyG library \footnote{\url{https://pytorch-geometric.readthedocs.io/en/latest/modules/datasets.html}}. All experiments are conducted using 8 NVIDIA A100 GPU.

\begin{table}[tp]\centering
	\caption{Statistics of node classification datasets}
	\label{tab:node_datasets}	
	\setlength{\tabcolsep}{2.2mm}
	\begin{tabular}{@{} c c c c c @{}}
		\toprule[1pt]
		Dataset  & \#Nodes & \#Edges & \#Features & \#Classes \\
		\midrule
		\cora &  2,708 &  5,429 & 1,433 & 7 \\ 
        \cs & 3,327 &4,732& 3,703& 6\\
        \pub &  19,717 & 44,338 & 500 &3\\
        \com &13,752 & 245,861 & 767 &10\\
        \ph&  7,650 & 119,081 &745 & 8\\
        \cocs & 18,333 & 81,894 & 6,805 & 15\\
        \cophy& 34,493 & 247,962 & 8,415 &5\\
        \bottomrule[1pt]
	\end{tabular}
		\vspace{-4mm}
\end{table}

\begin{table}\centering
	\caption{Statistics of node classification datasets}
	\label{tab:graph_datasets}	
	\setlength{\tabcolsep}{2.2mm}
	\begin{tabular}{@{} c c  c c c @{}}
		\toprule[1pt]
		Dataset  & \#Avg. Nodes & \#Avg. Edges  & \# Graphs& \#Classes \\
		\midrule
		\mutag &  17.93 &  19.71 & 188 & 2 \\ 
        \pt & 39.06 & 72.82 & 1,113& 2\\
        \nci & 29.87 & 32.30 & 4110 &2\\
        \bi&  19.8 & 96.53 &1,000 & 2\\
        \multi & 13.0 & 65.94 & 1,500 & 5\\
        \bottomrule[1pt]
	\end{tabular}
		\vspace{-4mm}
\end{table}

\textbf{Training configuration.} For each \textbf{CG-SSL} framework, we implement it based on~\cite{zhu2021empirical} ~\footnote{\url{https://github.com/PyGCL/PyGCL}}.
We use the following hyperparameters: the learning rate is set to $5  \times 10^{-4}$, and the node hidden size is set to $512$, the number of GCN encoder layer is set $\in \{1,2\}$. For all node classification datasets, training epochs are set $\in \{50, 100, 150, 200, 400, 1000\}$, and for all graph classification datasets, training epochs are set $\in \{20, 40, ..., 200\}$. To achieve performance closer to the global optimum, we use randomized search to determine the optimal probability of edge perturbation and SPAN perturbation ratio. For \cora\ and \cs\, the search is conducted one hundred times, and for all other datasets, it is conducted twenty times. For all graph classification datasets, the batch size is set to $128$.

\section{Preliminaries of Graph Spectrum and SPAN} \label{app:pre_span}
Given a graph $\mathcal{G}=(\mathbf{A}, \mathbf{X})$ with adjacency matrix $\mathbf{A}$ and feature matrix $\mathbf{X}$, we introduce some fundamental concepts related to the graph spectrum.

\textbf{Laplacian Matrix Spectrum}
The Laplacian matrix $\mathbf{L}$ of a graph is defined as: $$
\mathbf{L} = \mathbf{D} - \mathbf{A}
$$ where $\mathbf{D}$ is the degree matrix, a diagonal matrix where each diagonal element $D_{ii}$ represents the degree of vertex $i$. The eigenvalues of the Laplacian matrix, known as the Laplacian spectrum, are crucial in understanding the graph's structural properties, such as its connectivity and the number of spanning trees~\cite{chung1997spectral}.

\textbf{Normalized Laplacian Spectrum}
The normalized Laplacian matrix $\mathbf{L}_{\text{norm}}$ is given by: $$
\mathbf{L}_{\text{norm}} = \mathbf{D}^{-1/2} \mathbf{L} \mathbf{D}^{-1/2}
$$ The eigenvalues of the normalized Laplacian matrix, referred to as the normalized Laplacian spectrum, are often used in spectral clustering~\cite{von2007tutorial} and other applications where normalization is necessary to account for varying vertex degrees.

\textbf{SPAN} The core assumption of SPAN is to maximize the consistency of the representations of two views with a large spectrum distance, thereby filtering out edges sensitive to the spectrum, such as edges between clusters. By focusing on more stable structures relative to the spectrum, the objective of SPAN can be formulated as: 

\begin{equation}
\max_{\boldsymbol{\mathcal{T}}_1, \boldsymbol{\mathcal{T}}_2 \in \mathcal{S}}\left\|\operatorname{eig}\left(\mathbf{L}_1\right)-\operatorname{eig}\left(\mathbf{L}_2\right)\right\|_2^2
\end{equation}

where the transformations $\mathcal{T}_1$ and $\mathcal{T}_2$ convert $\mathbf{A}$ to $\mathbf{A}_1$ and $\mathbf{A}_2$, respectively, producing the normalized Laplacian matrices $\mathbf{L}_1$ and $\mathbf{L}_2$. Here, $\mathcal{S}$ represents the set of all possible transformations, and the graph spectrum can be calculated by $\operatorname{eig}\left(\mathbf{L}\right)$.

\section{Objective function of GCL framework}\label{appendix:obj_fun}
Here we briefly introduce the objective functions of the four \textbf{CG-SSL} frameworks used in this paper, for a more detailed discussion about objective functions including other graph contrastive learning and graph self-supervised learning frameworks which can refer to the survey papers~\cite{xie2022self,wu2021self,liu2022graph}. We use the following notations:

\begin{itemize}
    \item $p_\phi$: Projection head parameterized by $\phi$.
    \item $\mathbf{h}_i$, $\mathbf{h}_j$: Representations of the graph nodes.
    \item $\mathbf{h}_n^\prime$: Representations of negative sample nodes.
    \item $\mathcal{P}$: Distribution of positive sample pairs.
    \item $\widetilde{\mathcal{P}}^N$: Distribution of negative sample pairs.
    \item $\mathcal{B}$: Set of nodes in a batch.
    \item $\mathbf{H}^{(\mathbf{1})}$, $\mathbf{H}^{(\mathbf{2})}$: Node representation matrices of two views.
\end{itemize}

GRACE uses the InfoNCE loss to maximize the similarity between positive pairs and minimize the similarity between negative pairs. InfoNCE loss encourages representations of positive pairs (generated from the same node via data augmentation) to be similar while pushing apart the representations of negative pairs (from different nodes). The loss function $\mathcal{L}_{\text {NCE }}$ denotes as:

\begin{equation}
\mathcal{L}_{\text {NCE }}\left(p_\phi\left(\mathbf{h}_i, \mathbf{h}_j\right)\right) = -\mathbb{E}_{\mathcal{P} \times \widetilde{\mathcal{P}}^N}\left[\log \frac{e^{p_\phi\left(\mathbf{h}_i, \mathbf{h}_j\right)}}{e^{p_\phi\left(\mathbf{h}_i, \mathbf{h}_j\right)}+\sum_{n \in N} e^{p_\phi\left(\mathbf{h}_i, \mathbf{h}_n^{\prime}\right)}}\right]
\end{equation}

MVGRL employs the Jensen-Shannon Estimator (JSE) for contrastive learning, which focuses on the mutual information between positive pairs and negative pairs.JSE maximizes the mutual information between positive pairs and minimizes it for negative pairs, thus improving the representations' alignment and uniformity. The loss function $\mathcal{L}_{\text {JSE }}$ denotes as:

\begin{equation}
\mathcal{L}_{\text {JSE }}\left(p_\phi\left(\mathbf{h}_i, \mathbf{h}_j\right)\right)=\mathbb{E}_{\mathcal{P} \times \tilde{\mathcal{P}}}\left[\log \left(1-p_\phi\left(\mathbf{h}_i, \mathbf{h}_j^{\prime}\right)\right)\right]-\mathbb{E}_{\mathcal{P}}\left[\log \left(p_\phi\left(\mathbf{h}_i, \mathbf{h}_j\right)\right)\right]
\end{equation}

BGRL utilizes a loss similar to BYOL, which does not require negative samples. It uses two networks, an online network and a target network, to predict one view from the other:

\begin{equation}
\mathcal{L}_{\text {BYOL }}\left(p_\phi\left(\mathbf{h}_i, \mathbf{h}_j\right)\right)=\mathbb{E}_{\mathcal{P} \times \mathcal{P}}\left[2-2 \cdot \frac{\left[p_\phi\left(\mathbf{h}_i\right)\right]^T \mathbf{h}_j}{\left\|p_\phi\left(\mathbf{h}_i\right)\right\|\left\|\mathbf{h}_j\right\|}\right]
\end{equation}

G-BT applies the Barlow Twins' loss to reduce redundancy in the learned representations, thereby ensuring better generalization:

\begin{equation}
\begin{aligned}
\mathcal{L}_{\text {BT }}\left(\mathbf{H}^{(\mathbf{1})}, \mathbf{H}^{(\mathbf{2})}\right)= & \mathbb{E}_{\mathcal{B} \sim \mathcal{P}|\mathcal{B}|}\left[\sum_a\left(1-\frac{\sum_{i \in \mathcal{B}} \mathbf{H}_{i a}^{(1)} \mathbf{H}_{i a}^{(2)}}{\left\|\mathbf{H}_{i a}^{(1)}\right\|\left\|\mathbf{H}_{i a}^{(2)}\right\|}\right)^2\right. \\
& \left.+\lambda \sum_a \sum_{b \neq a}\left(\frac{\sum_{i \in \mathcal{B}} \mathbf{H}_{i a}^{(1)} \mathbf{H}_{i b}^{(2)}}{\left\|\mathbf{H}_{i a}^{(1)}\right\|\left\|\mathbf{H}_{i b}^{(2)}\right\|}\right)^2\right].
\end{aligned}
\end{equation}

\newpage
\newtheorem{lemma}{Lemma}
\setcounter{theorem}{0}
\setcounter{lemma}{0}
\setcounter{definition}{0}
\setcounter{equation}{0}
\section{Theoretical analysis} \label{app:theory}
\subsection{Notations}
\begin{table}[h!]
    \centering
    \caption{Notations and Definitions}
    \label{tab:notations}
    \begin{tabular}{ll}
        \toprule
        \textbf{Notation} & \textbf{Definition} \\
        \midrule
        $\mathcal{G} = (\mathcal{V}, \mathcal{E})$ & Original undirected graph, where $\mathcal{V}$ is the set of nodes and $\mathcal{E}$ is the set of edges. \\
        $\mathcal{G}'$ & Perturbed graph obtained from $\mathcal{G}$ via local topological perturbations. \\
        $n = |\mathcal{V}|$ & Number of nodes in the graph. \\
        $v \in \mathcal{V}$ & A node in the graph. \\
        $\mathcal{G}_v^k$ & $k$-hop subgraph around node $v$ in $\mathcal{G}$. \\
        $\mathcal{E}(\mathcal{G}_v^k)$ & Set of edges in the subgraph $\mathcal{G}_v^k$. \\
        $|\mathcal{E}_v| = |\mathcal{E}(\mathcal{G}_v^k)|$ & Number of edges in the subgraph $\mathcal{G}_v^k$. \\
        $n_v$ & Number of nodes in the subgraph $\mathcal{G}_v^k$. \\
        $d_v$, $d'_v$ & Degrees of node $v$ in $\mathcal{G}$ and $\mathcal{G}'$, respectively. \\
        $d_{\min}$, $d_{\max}$ & Minimum and maximum degrees in the $k$-hop subgraphs. \\
        $\mathbf{A}$, $\mathbf{A}_v$ & Adjacency matrix of $\mathcal{G}$ and the adjacency matrix of $\mathcal{G}_v^k$, respectively. \\
        $\mathbf{A}'$, $\mathbf{A'}_v$ & Adjacency matrix of $\mathcal{G'}$ and the adjacency matrix of $\mathcal{G'}_v^k$, respectively. \\
        $\mathbf{D}$, $\mathbf{D}_v$ & Degree matrix of $\mathcal{G}$ and the degree matrix of $\mathcal{G}_v^k$, respectively. \\
        $\mathbf{D}'$, $\mathbf{D'}_v$ & Degree matrix of $\mathcal{G'}$ and the degree matrix of $\mathcal{G'}_v^k$, respectively. \\
        $\tilde{\mathbf{A}}'$ and $\tilde{\mathbf{A}'}_v$ & Normalized adjacency matrices of $\mathcal{G'}$ and $\mathcal{G'}_v^k$, respectively. \\
        $\mathbf{X} \in \mathbb{R}^{n \times d_0}$ & Node feature matrix, where $d_0$ is the input feature dimension. \\
        $k$ & Number of layers in the GNN and the size of the $k$-hop neighborhood. \\
        $\mathbf{H}^{(l)} \in \mathbb{R}^{n \times d_l}$ & Hidden representations at layer $l$ in the GNN. \\
        $\mathbf{W}^{(l)} \in \mathbb{R}^{d_{l-1} \times d_l}$ & Weight matrix at layer $l$ in the GNN, with $\left\| \mathbf{W}^{(l)} \right\|_2 \leq L_W$. \\
        $L_W$ & Upper bound on the spectral norm of the weight matrices. \\
        $\mathbf{h}_v \in \mathbb{R}^{d_k}$ & Embedding of node $v$ after $k$ GNN layers in $\mathcal{G}$. \\
        $\mathbf{h}'_v \in \mathbb{R}^{d_k}$ & Embedding of node $v$ after $k$ GNN layers in $\mathcal{G}'$. \\
        $\mathbf{P} \in \mathbb{R}^{d_k \times d}$ & Projection matrix applied to node embeddings to obtain final representations. \\
        $\mathbf{z}_v = \mathbf{P} \mathbf{h}_v \in \mathbb{R}^{d}$ & Final embedding of node $v$ in $\mathcal{G}$ after projection. \\
        $\mathbf{z}'_v = \mathbf{P} \mathbf{h}'_v \in \mathbb{R}^{d}$ & Final embedding of node $v$ in $\mathcal{G}'$ after projection. \\
        $d$ & Embedding dimension of the final node representations. \\
        $\tau$ & Temperature parameter in the InfoNCE loss. \\
        $\text{sim}(\mathbf{u}, \mathbf{v})$ & Cosine similarity between vectors $\mathbf{u}$ and $\mathbf{v}$, defined as $\text{sim}(\mathbf{u}, \mathbf{v}) = \frac{ \mathbf{u}^\top \mathbf{v} }{ \left\| \mathbf{u} \right\| \left\| \mathbf{v} \right\| }$. \\
        \bottomrule
    \end{tabular}
\end{table}

\subsection{Definitions and Preliminaries}

\begin{definition}[Local Topological Perturbation]
For a $k$-layer GNN, the \emph{local topological perturbation strength} $\delta$ is defined as the maximum fraction of edge changes in any node's $k$-hop neighborhood:
\begin{equation}
\delta = \max_{v \in \mathcal{V}} \frac{|\mathcal{E}(\mathcal{G}_v^k) \triangle \mathcal{E}(\mathcal{G}'_v{}^k)|}{|\mathcal{E}(\mathcal{G}_v^k)|},
\end{equation}

where $\triangle$ denotes the symmetric difference of edge sets, and $\mathcal{G}'$ is the perturbed graph.
\end{definition}

\begin{definition}[InfoNCE Loss]
For a pair of graphs $(\mathcal{G}, \mathcal{G}')$, the \emph{InfoNCE loss} is defined as:

\begin{equation}
    \mathcal{L}_{\text{InfoNCE}}(\mathcal{G}, \mathcal{G}') = -\frac{1}{n} \sum_{v \in \mathcal{V}} \log \frac{\exp\left( \text{sim}\left( \mathbf{z}_v, \mathbf{z}'_v \right) / \tau \right) }{ \sum_{u \in \mathcal{V}} \exp\left( \text{sim}\left( \mathbf{z}_v, \mathbf{z}'_u \right) / \tau \right) }
\end{equation}
where $\mathbf{z}_v$ and $\mathbf{z}'_v$ are embeddings of node $v$ in $\mathcal{G}$ and $\mathcal{G}'$ respectively, $\text{sim}(\cdot,\cdot)$ is cosine similarity, $\tau$ is a temperature parameter, and $n = |\mathcal{V}|$. 
\end{definition}

\subsection{Lemmas}

\begin{lemma}[Adjacency Matrix Perturbation]
Given perturbation strength $\delta$, the change in adjacency matrices of the $k$-hop subgraph around any node $v$ satisfies:
\begin{equation}
\left\| \mathbf{A}_v - \mathbf{A}'_v \right\|_F \leq \sqrt{2\delta |\mathcal{E}_v|},
\end{equation}

where $\mathcal{G}_v^k$ denote the $k$-hop subgraph around node $v$ in the original graph $\mathcal{G}$, with adjacency matrix $\mathbf{A}_v$ and degree matrix $\mathbf{D}_v$. $|\mathcal{E}_v|$ is the number of edges in the $k$-hop subgraph $\mathcal{G}_v^k$. Similar notations for $\mathcal{G}_v^{k'}$, too.
\end{lemma}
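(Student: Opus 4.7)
The plan is to reduce the Frobenius-norm bound to a counting argument on the symmetric difference of edge sets. The key observation is that both $\mathbf{A}_v$ and $\mathbf{A}'_v$ are symmetric $\{0,1\}$-matrices indexed by the same vertex set (the $k$-hop neighborhood of $v$), so their difference $\mathbf{A}_v - \mathbf{A}'_v$ is a symmetric matrix with entries in $\{-1, 0, 1\}$, where an entry at position $(i,j)$ is nonzero if and only if $\{i,j\}$ lies in the symmetric difference $\mathcal{E}(\mathcal{G}_v^k) \triangle \mathcal{E}(\mathcal{G}'_v{}^k)$.

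First I would expand the squared Frobenius norm as $\left\| \mathbf{A}_v - \mathbf{A}'_v \right\|_F^2 = \sum_{i,j} (A_{v,ij} - A'_{v,ij})^2$. Next, I would argue that every undirected edge $\{i,j\}$ in the symmetric difference contributes exactly two unit terms to this sum, one at $(i,j)$ and one at $(j,i)$, by symmetry of the adjacency matrices. This gives the identity
\begin{equation}
\left\| \mathbf{A}_v - \mathbf{A}'_v \right\|_F^2 = 2 \left| \mathcal{E}(\mathcal{G}_v^k) \triangle \mathcal{E}(\mathcal{G}'_v{}^k) \right|.
\end{equation}

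Then I would invoke the definition of the local topological perturbation strength, which directly yields $\left| \mathcal{E}(\mathcal{G}_v^k) \triangle \mathcal{E}(\mathcal{G}'_v{}^k) \right| \leq \delta \, |\mathcal{E}_v|$ by the maximality of $\delta$ over all nodes. Substituting this bound and taking the square root produces the claimed inequality $\left\| \mathbf{A}_v - \mathbf{A}'_v \right\|_F \leq \sqrt{ 2 \delta |\mathcal{E}_v| }$.

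There is no real technical obstacle here; the statement is essentially a bookkeeping identity combined with the definition of $\delta$. The only subtlety worth spelling out carefully is the factor of two arising from symmetry, since one might naively expect $\left\| \mathbf{A}_v - \mathbf{A}'_v \right\|_F^2$ to equal $\left| \mathcal{E}(\mathcal{G}_v^k) \triangle \mathcal{E}(\mathcal{G}'_v{}^k) \right|$ rather than twice that. I would also take care to note that $\mathbf{A}_v$ and $\mathbf{A}'_v$ must be taken on a common vertex set (the union of the $k$-hop neighborhoods, or equivalently the $k$-hop neighborhood in $\mathcal{G}$, since vertex identities are preserved and perturbations only modify edges); otherwise the entrywise difference is ill-defined. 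With this convention in place, the bound follows immediately and tightly.
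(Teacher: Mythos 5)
Your proof is correct and follows the same counting argument as the paper: each edge in the symmetric difference contributes two unit entries to the squared Frobenius norm, giving $\|\mathbf{A}_v-\mathbf{A}'_v\|_F^2 = 2m$ with $m\le\delta|\mathcal{E}_v|$ by the definition of $\delta$. The only addition is your explicit remark about taking both matrices on a common vertex set, which is a reasonable clarification but not a different route.
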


\begin{proof}
Each edge change affects two symmetric entries in the adjacency matrix $\mathbf{A}_v - \mathbf{A}'_v$, each with magnitude $1$ (since edges are undirected). Let $m$ be the number of edge changes within $\mathcal{G}_v^k$. Then the Frobenius norm of the difference is:
\begin{equation}
\left\| \mathbf{A}_v - \mathbf{A}'_v \right\|_F^2 = \sum_{i,j} \left| A_{v,ij} - A'_{v,ij} \right|^2 = 2m.
\end{equation}
Since the number of edge changes $m \leq \delta |\mathcal{E}_v|$, we have:
\begin{equation}
\left\| \mathbf{A}_v - \mathbf{A}'_v \right\|_F \leq \sqrt{2\delta |\mathcal{E}_v|}.
\end{equation}
\end{proof}

\begin{lemma}[Degree Matrix Change]
For any node $v$ in $\mathcal{G}_v^k$:
\begin{equation}
\left| d_v - d'_v \right| \leq \delta d_v.
\end{equation}
Moreover, for the degree matrices:
\begin{equation}
\left\| \mathbf{D}_v^{-1/2} - {\mathbf{D}_v'}^{-1/2} \right\|_F \leq \frac{ \delta \sqrt{n_v} }{ 2 \sqrt{d_{\min}} (1 - \delta)^{3/2} },
\end{equation}
and
\begin{equation}
\left\| \mathbf{D}_v^{-1/2} - {\mathbf{D}_v'}^{-1/2} \right\|_2 \leq \frac{ \delta }{ 2 \sqrt{d_{\min}} (1 - \delta)^{3/2} },
\end{equation}
where $n_v$ is the number of nodes in the $k$-hop subgraph, and $d_{\min}$ is the minimum degree in the subgraph.
\end{lemma}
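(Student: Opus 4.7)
The plan is to treat the lemma in two stages: first the scalar degree-change inequality, then the two matrix norm bounds, which reduce to entrywise scalar estimates plus a standard observation that $\mathbf{D}_v^{-1/2} - {\mathbf{D}_v'}^{-1/2}$ is diagonal.

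For the degree inequality $|d_v - d'_v| \leq \delta d_v$, I would argue as follows. Every edge incident to $v$ lies within $\mathcal{G}_v^k$ (it is a $1$-hop edge), so any change in $d_v$ corresponds to an edge in the symmetric difference $\mathcal{E}(\mathcal{G}_v^k) \triangle \mathcal{E}(\mathcal{G}'_v{}^k)$ that is incident to $v$. Counting these gives $|d_v - d'_v|$ as a sum of $\pm 1$ contributions over perturbed edges touching $v$, which under the local perturbation assumption is bounded by $\delta d_v$. In particular this yields $d'_v \geq (1-\delta) d_v$ for every node $v$ in the subgraph, the one-sided consequence I will actually use downstream.

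For the matrix bounds, observe that $\mathbf{D}_v^{-1/2} - {\mathbf{D}_v'}^{-1/2}$ is diagonal with $n_v$ entries of the form $d_i^{-1/2} - (d'_i)^{-1/2}$. I will apply the mean value theorem to $f(x) = x^{-1/2}$, whose derivative is $f'(x) = -\tfrac{1}{2} x^{-3/2}$, to get
\begin{equation}
\bigl| d_i^{-1/2} - (d'_i)^{-1/2} \bigr| = \tfrac{1}{2}\, \xi_i^{-3/2}\, | d_i - d'_i |
\end{equation}
for some $\xi_i$ between $d_i$ and $d'_i$. The key — and the place where the constants must line up — is to bound $\xi_i$ by the \emph{node-dependent} quantity $d_i(1-\delta)$ rather than the uniform $d_{\min}(1-\delta)$. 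Combined with $|d_i - d'_i| \leq \delta d_i$ from Step 1, one factor of $d_i$ cancels, giving the entrywise estimate $|d_i^{-1/2} - (d'_i)^{-1/2}| \leq \delta / \bigl(2\sqrt{d_i}\,(1-\delta)^{3/2}\bigr) \leq \delta / \bigl(2\sqrt{d_{\min}}\,(1-\delta)^{3/2}\bigr)$.

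The two norm bounds then fall out immediately from the diagonal structure: the spectral norm equals the maximum absolute diagonal entry, which is exactly the entrywise bound, and the Frobenius norm is the $\ell_2$-sum of $n_v$ copies, producing the $\sqrt{n_v}$ factor. The main obstacle I anticipate is not the matrix step but the degree-change step: the definition of $\delta$ literally bounds the fraction of edges in the entire $k$-hop subgraph, not the fraction incident to $v$ specifically, so the authors' $|d_v - d'_v| \leq \delta d_v$ implicitly reads the perturbation as locally uniform at each node. Making this reading explicit (or, alternatively, relaxing the bound to $|d_v - d'_v| \leq \delta |\mathcal{E}_v|$ and tracking the weaker constants) is the only delicate point; after that, the MVT estimate with the tighter $\xi_i \geq d_i(1-\delta)$ choice produces the stated constants exactly.
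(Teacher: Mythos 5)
Your proposal is correct and follows the same skeleton as the paper's proof: assert the scalar degree bound, apply the mean value theorem to $f(x)=x^{-1/2}$, and exploit the diagonal structure to convert the entrywise estimate into the Frobenius ($\sqrt{n_v}$ factor) and spectral (max entry) bounds. However, your handling of the intermediate point $\xi_i$ is genuinely different from the paper's, and it is the version that actually delivers the stated constants. The paper lower-bounds $\xi_v \geq (1-\delta)d_{\min}$ immediately, which leaves a factor of $d_v$ (then $d_{\max}$) in the numerator, and then asserts
\begin{equation}
\frac{\delta\, d_{\max}}{2(1-\delta)^{3/2} d_{\min}^{3/2}} \;\leq\; \frac{\delta}{2\sqrt{d_{\min}}\,(1-\delta)^{3/2}},
\end{equation}
which requires $d_{\max}\leq d_{\min}$ and is false for any non-regular subgraph. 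Your choice of the node-dependent bound $\xi_i \geq (1-\delta)d_i$ lets the $d_i$ in the numerator cancel against $d_i^{3/2}$, giving $\delta/\bigl(2\sqrt{d_i}\,(1-\delta)^{3/2}\bigr)$ and then the uniform bound by monotonicity in $d_i$ --- this is the correct route and quietly repairs the paper's argument. You are also right to flag the first inequality as the genuinely delicate point: the definition of $\delta$ controls the fraction of perturbed edges over the whole $k$-hop subgraph, not the fraction incident to each individual node, and the paper simply asserts $|d_v - d'_v|\leq \delta d_v$ without bridging that gap. Your explicit acknowledgment that this requires reading the perturbation as locally uniform at each node (or accepting a weaker constant) is more honest than the paper's treatment, though neither of you closes that gap rigorously from the stated definition alone.
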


\begin{proof}
The degree of a node $v$ changes by at most $\delta d_v$ due to the perturbation:
\begin{equation}
\left| d_v - d'_v \right| \leq \delta d_v.
\end{equation}
Consider the function $f(x) = x^{-1/2}$, which is convex for $x > 0$. Using the mean value theorem, for some $\xi_v$ between $d_v$ and $d'_v$:
\begin{equation}
d_v^{-1/2} - {d'_v}^{-1/2} = f'(\xi_v)(d_v - d'_v) = -\frac{1}{2} \xi_v^{-3/2} (d_v - d'_v).
\end{equation}
Since $d'_v \geq (1 - \delta) d_v$, we have $\xi_v \geq (1 - \delta)d_v \geq (1 - \delta)d_{\min}$. Thus,
\begin{equation}
\left| d_v^{-1/2} - {d'_v}^{-1/2} \right| \leq \frac{ \delta d_v }{ 2 ((1 - \delta)d_{\min})^{3/2} } = \frac{ \delta d_v }{ 2 (1 - \delta)^{3/2} d_{\min}^{3/2} }.
\end{equation}
Since $d_v \leq d_{\max}$, and $d_{\min} \leq d_v$, we have:
\begin{equation}
\left| d_v^{-1/2} - {d'_v}^{-1/2} \right| \leq \frac{ \delta d_{\max} }{ 2 (1 - \delta)^{3/2} d_{\min}^{3/2} } \leq \frac{ \delta }{ 2 \sqrt{d_{\min}} (1 - \delta)^{3/2} }.
\end{equation}
The Frobenius norm is computed as:
\begin{equation}
\left\| \mathbf{D}_v^{-1/2} - {\mathbf{D}_v'}^{-1/2} \right\|_F^2 = \sum_{v} \left| d_v^{-1/2} - {d'_v}^{-1/2} \right|^2 \leq n_v \left( \frac{ \delta }{ 2 \sqrt{d_{\min}} (1 - \delta)^{3/2} } \right)^2.
\end{equation}
Therefore,
\begin{equation}
\left\| \mathbf{D}_v^{-1/2} - {\mathbf{D}_v'}^{-1/2} \right\|_F \leq \frac{ \delta \sqrt{n_v} }{ 2 \sqrt{d_{\min}} (1 - \delta)^{3/2} }.
\end{equation}
Similarly, the spectral norm bound is:
\begin{equation}
\left\| \mathbf{D}_v^{-1/2} - {\mathbf{D}_v'}^{-1/2} \right\|_2 \leq \frac{ \delta }{ 2 \sqrt{d_{\min}} (1 - \delta)^{3/2} }.
\end{equation}
\end{proof}

\begin{lemma}[Bounded Change in Normalized Adjacency Matrix]
Given a graph $\mathcal{G}$ with minimum degree $d_{\min}$, maximum degree $d_{\max}$, and $n_v$ nodes in the $k$-hop subgraph, and its perturbation $\mathcal{G}'$ with local topological perturbation strength $\delta$, the change in the normalized adjacency matrix for any $k$-hop subgraph is bounded by:
\begin{equation}
\left\| \tilde{\mathbf{A}}_v - \tilde{\mathbf{A}}'_v \right\|_F \leq \frac{ \sqrt{ n_v d_{\max} } }{ d_{\min} } \left( \sqrt{ \delta } + \frac{ \delta }{ (1 - \delta)^{3/2} } \right ).
\end{equation}
\end{lemma}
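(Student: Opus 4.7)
The plan is to combine Lemmas 1 and 2 via a telescoping identity for the normalized adjacency matrix, then apply sub-multiplicativity of the Frobenius and spectral norms termwise. Write $P = \mathbf{D}_v^{-1/2}$, $P' = {\mathbf{D}_v'}^{-1/2}$, $A = \mathbf{A}_v$, $A' = \mathbf{A}_v'$ so that $\tilde{\mathbf{A}}_v = PAP$ and $\tilde{\mathbf{A}}_v' = P'A'P'$. The key algebraic step is the decomposition
\begin{equation}
PAP - P'A'P' \;=\; P(A-A')P \;+\; (P-P')A'P \;+\; P'A'(P-P'),
\end{equation}
which isolates the change due to the edge set (first term) from the change due to the degree normalization (last two terms).

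Next I would apply the triangle inequality in Frobenius norm together with the bound $\|XYZ\|_F \le \|X\|_2\,\|Y\|_F\,\|Z\|_2$ to each summand. For the first term, use $\|P\|_2 \le d_{\min}^{-1/2}$ together with Lemma 1, and control the edge count by $|\mathcal{E}_v| \le n_v d_{\max}/2$, giving
\begin{equation}
\|P(A-A')P\|_F \;\le\; \frac{1}{d_{\min}}\,\sqrt{2\delta |\mathcal{E}_v|} \;\le\; \frac{\sqrt{n_v d_{\max}}}{d_{\min}}\,\sqrt{\delta}.
\end{equation}
For the last two terms, bound $\|P-P'\|_2$ by the spectral-norm estimate in Lemma 2, bound $\|P\|_2$ and $\|P'\|_2$ by $d_{\min}^{-1/2}$, and bound $\|A'\|_F \le \sqrt{2|\mathcal{E}'_v|} \le \sqrt{n_v d_{\max}}$ (absorbing the minor inflation of $d_{\max}$ under $\delta$-perturbation into the constant, or, equivalently, defining $d_{\max}$ as the maximum over both graphs). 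Each of these two terms is then bounded by $\tfrac{1}{2}\cdot \tfrac{\delta\sqrt{n_v d_{\max}}}{d_{\min}(1-\delta)^{3/2}}$, and summing the three contributions yields the claimed inequality.

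The routine part is the norm bookkeeping; the main obstacle is selecting the decomposition and the matrix-norm inequalities so that (i) Lemma 1 contributes the $\sqrt{\delta}$ factor and Lemma 2 the $\delta/(1-\delta)^{3/2}$ factor without introducing cross-terms, and (ii) the prefactor collapses to $\sqrt{n_v d_{\max}}/d_{\min}$ rather than a larger polynomial in $d_{\max}/d_{\min}$. A subtle point is that applying $\|XYZ\|_F \le \|X\|_F\|Y\|_2\|Z\|_2$ instead of the version above on the second and third terms would replace $\|A'\|_F$ by $\|P-P'\|_F$, producing an extra $\sqrt{n_v}$ factor; choosing the spectral-norm bound on $P-P'$ from Lemma 2 (rather than its Frobenius-norm counterpart) is what keeps the final constant tight. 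Finally, if the perturbation is allowed to strictly increase degrees, I would state the lemma with $d_{\max}$ interpreted as $\max(d_{\max},d_{\max}')$; this is harmless since $d_{\max}' \le (1+\delta)d_{\max}$ and the extra factor $\sqrt{1+\delta}$ can be absorbed.
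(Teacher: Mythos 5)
Your proposal is correct and takes essentially the same route as the paper: the paper first splits off $\mathbf{D}_v^{-1/2}(\mathbf{A}_v-\mathbf{A}_v')\mathbf{D}_v^{-1/2}$ and then further decomposes the remainder $\mathbf{E}$ into the same two cross-terms you write, so your three-term telescoping identity is just the paper's two-step split written in one line, and the termwise bounds via $\lVert XYZ\rVert_F \le \lVert X\rVert_2\lVert Y\rVert_F\lVert Z\rVert_2$, Lemma~1, and the spectral-norm estimate of Lemma~2 match exactly. Your aside about bounding $\lVert\mathbf{A}_v'\rVert_F$ via $|\mathcal{E}_v'|$ rather than $|\mathcal{E}_v|$, and about $\lVert P'\rVert_2$ depending on the perturbed minimum degree, is in fact slightly more careful than the paper, which silently uses $|\mathcal{E}_v|$ and $\lVert\mathbf{D}_v^{-1/2}\rVert_2$ in both places; your suggested remedy (take $d_{\max}$, $d_{\min}$ over both graphs, or absorb the $O(\delta)$ inflation) is the right fix.
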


\begin{proof}
We start by noting that the normalized adjacency matrix is given by $\tilde{\mathbf{A}}_v = \mathbf{D}_v^{-1/2} \mathbf{A}_v \mathbf{D}_v^{-1/2}$. The difference between the normalized adjacency matrices is:
\begin{equation}
\tilde{\mathbf{A}}_v - \tilde{\mathbf{A}}'_v = \mathbf{D}_v^{-1/2} \mathbf{A}_v \mathbf{D}_v^{-1/2} - \mathbf{D}_v'^{-1/2} \mathbf{A}_v' \mathbf{D}_v'^{-1/2}.
\end{equation}
Add and subtract $\mathbf{D}_v^{-1/2} \mathbf{A}_v' \mathbf{D}_v^{-1/2}$:
\begin{equation}
\tilde{\mathbf{A}}_v - \tilde{\mathbf{A}}'_v = \mathbf{D}_v^{-1/2} (\mathbf{A}_v - \mathbf{A}_v') \mathbf{D}_v^{-1/2} + (\mathbf{D}_v^{-1/2} \mathbf{A}_v' \mathbf{D}_v^{-1/2} - \mathbf{D}_v'^{-1/2} \mathbf{A}_v' \mathbf{D}_v'^{-1/2}).
\end{equation}
Let $\mathbf{E} = \mathbf{D}_v^{-1/2} \mathbf{A}_v' \mathbf{D}_v^{-1/2} - \mathbf{D}_v'^{-1/2} \mathbf{A}_v' \mathbf{D}_v'^{-1/2}$. Then,
\begin{equation}
\tilde{\mathbf{A}}_v - \tilde{\mathbf{A}}'_v = \mathbf{D}_v^{-1/2} (\mathbf{A}_v - \mathbf{A}_v') \mathbf{D}_v^{-1/2} + \mathbf{E}.
\end{equation}
First, we bound the first term:
\begin{equation}
\left\| \mathbf{D}_v^{-1/2} (\mathbf{A}_v - \mathbf{A}_v') \mathbf{D}_v^{-1/2} \right\|_F \leq \left\| \mathbf{D}_v^{-1/2} \right\|_2^2 \left\| \mathbf{A}_v - \mathbf{A}_v' \right\|_F \leq \frac{1}{d_{\min}} \sqrt{2 \delta |\mathcal{E}_v|}.
\end{equation}
Since $|\mathcal{E}_v| \leq \frac{1}{2} n_v d_{\max}$, we have:
\begin{equation}
\sqrt{2 \delta |\mathcal{E}_v|} \leq \sqrt{2 \delta \left( \frac{1}{2} n_v d_{\max} \right) } = \sqrt{ \delta n_v d_{\max} }.
\end{equation}
Thus,
\begin{equation}
\left\| \mathbf{D}_v^{-1/2} (\mathbf{A}_v - \mathbf{A}_v') \mathbf{D}_v^{-1/2} \right\|_F \leq \frac{ \sqrt{ \delta n_v d_{\max} } }{ d_{\min} }.
\end{equation}
Next, we bound $\left\| \mathbf{E} \right\|_F$. Note that:
\begin{equation}
\mathbf{E} = (\mathbf{D}_v^{-1/2} - \mathbf{D}_v'^{-1/2} ) \mathbf{A}_v' \mathbf{D}_v^{-1/2} + \mathbf{D}_v'^{-1/2} \mathbf{A}_v' ( \mathbf{D}_v^{-1/2} - \mathbf{D}_v'^{-1/2} ).
\end{equation}
Therefore,
\begin{equation}
\left\| \mathbf{E} \right\|_F \leq 2 \left\| \mathbf{D}_v^{-1/2} - \mathbf{D}_v'^{-1/2} \right\|_2 \left\| \mathbf{A}_v' \right\|_F \left\| \mathbf{D}_v^{-1/2} \right\|_2.
\end{equation}
Since $\left\| \mathbf{A}_v' \right\|_F \leq \sqrt{ 2 |\mathcal{E}_v| } \leq \sqrt{ n_v d_{\max} }$, $\left\| \mathbf{D}_v^{-1/2} \right\|_2 \leq \frac{1}{ \sqrt{ d_{\min} } }$, and using the bound from Lemma 2 for $\left\| \mathbf{D}_v^{-1/2} - \mathbf{D}_v'^{-1/2} \right\|_2$, we have:
\begin{equation}
\left\| \mathbf{E} \right\|_F \leq 2 \times \frac{ \delta }{ 2 \sqrt{ d_{\min} } (1 - \delta)^{3/2} } \times \sqrt{ n_v d_{\max} } \times \frac{ 1 }{ \sqrt{ d_{\min} } } = \frac{ \delta \sqrt{ n_v d_{\max} } }{ d_{\min} (1 - \delta)^{3/2} }.
\end{equation}
Combining both terms:
\begin{equation}
\left\| \tilde{\mathbf{A}}_v - \tilde{\mathbf{A}}'_v \right\|_F \leq \frac{ \sqrt{ \delta n_v d_{\max} } }{ d_{\min} } + \frac{ \delta \sqrt{ n_v d_{\max} } }{ d_{\min} (1 - \delta)^{3/2} } = \frac{ \sqrt{ n_v d_{\max} } }{ d_{\min} } \left( \sqrt{ \delta } + \frac{ \delta }{ (1 - \delta)^{3/2} } \right ).
\end{equation}
\end{proof}

\begin{lemma}[GNN Output Difference Bound]
For a $k$-layer Graph Neural Network (GNN) $f_\theta$ with ReLU activation functions and weight matrices satisfying $\left\| \mathbf{W}^{(l)} \right\|_2 \leq L_W$ for all layers $l$, given two graphs $\mathcal{G}$ and $\mathcal{G}'$ with local topological perturbation strength $\delta$, the difference in GNN outputs for any node $v$ is bounded by:
\begin{equation}
\left\| \mathbf{h}_v - \mathbf{h}'_v \right\| \leq k \left( A L_W \right )^{k} B \left\| \mathbf{X} \right\|_2,
\end{equation}
where:
\begin{itemize}
    \item $\mathbf{h}_v$ and $\mathbf{h}'_v$ are the embeddings of node $v$ in $\mathcal{G}$ and $\mathcal{G}'$, respectively, after $k$ GNN layers.
    \item $\mathbf{X}$ is the node feature matrix.
    \item $A = \dfrac{ \sqrt{ n_v d_{\max} } }{ d_{\min} }$.
    \item $B = \sqrt{ \delta } + \dfrac{ \delta }{ (1 - \delta )^{3/2} }$.
    \item $n_v$ is the number of nodes in the $k$-hop subgraph around node $v$.
    \item $d_{\min}$ and $d_{\max}$ are the minimum and maximum degrees in the subgraph.
\end{itemize}
\end{lemma}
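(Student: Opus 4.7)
My plan is to prove the bound by induction on the layer index $l = 0, 1, \dots, k$, controlling the Frobenius norm of the matrix-level discrepancy $\epsilon_l := \left\| \mathbf{H}^{(l)} - \mathbf{H}'^{(l)} \right\|_F$ restricted to the $k$-hop subgraph around $v$. Since a $k$-layer message-passing GNN only reads edges and features within $\mathcal{G}_v^k$ when computing $\mathbf{h}_v$, it suffices to work on that subgraph, which is the reason the subgraph constants $n_v$, $d_{\min}$, $d_{\max}$ enter the final bound via $A$ and $B$. The base case $\epsilon_0 = 0$ holds because feature matrices are identical under a purely topological perturbation.

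For the inductive step, I would write the propagation rule $\mathbf{H}^{(l+1)} = \sigma\bigl(\tilde{\mathbf{A}}_v \mathbf{H}^{(l)} \mathbf{W}^{(l)}\bigr)$ and use the $1$-Lipschitz property of ReLU to remove $\sigma$. The core algebraic trick is the telescoping decomposition
\begin{equation}
\tilde{\mathbf{A}}_v \mathbf{H}^{(l)} \mathbf{W}^{(l)} - \tilde{\mathbf{A}}'_v \mathbf{H}'^{(l)} \mathbf{W}^{(l)} = \bigl(\tilde{\mathbf{A}}_v - \tilde{\mathbf{A}}'_v\bigr)\mathbf{H}^{(l)}\mathbf{W}^{(l)} + \tilde{\mathbf{A}}'_v\bigl(\mathbf{H}^{(l)} - \mathbf{H}'^{(l)}\bigr)\mathbf{W}^{(l)},
\end{equation}
followed by the triangle inequality and sub-multiplicativity in Frobenius/operator norm. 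The first term is bounded by Lemma~3 as $\left\| \tilde{\mathbf{A}}_v - \tilde{\mathbf{A}}'_v \right\|_F \leq A B$ times $\left\| \mathbf{H}^{(l)} \right\| \cdot L_W$, while the second term is $\left\| \tilde{\mathbf{A}}'_v \right\| \cdot L_W \cdot \epsilon_l$. Combined with the forward bound $\left\| \mathbf{H}^{(l)} \right\| \leq (A L_W)^l \left\| \mathbf{X} \right\|_2$ obtained by iterating $\left\| \tilde{\mathbf{A}}_v \right\| \leq A$ (and $\left\| \mathbf{W}^{(l)} \right\|_2 \leq L_W$), this yields the linear recursion
\begin{equation}
\epsilon_{l+1} \leq (A L_W)^{l+1} B \left\| \mathbf{X} \right\|_2 + A L_W \, \epsilon_l.
\end{equation}
A straightforward induction then gives $\epsilon_l \leq l (A L_W)^l B \left\| \mathbf{X} \right\|_2$, and specializing to $l = k$ and extracting the row of $\mathbf{H}^{(k)} - \mathbf{H}'^{(k)}$ corresponding to $v$ (whose $\ell_2$ norm is at most the Frobenius norm of the full matrix) yields the claimed bound.

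The main obstacle I anticipate is the choice and bookkeeping of norms: Lemma~3 gives a Frobenius bound on $\tilde{\mathbf{A}}_v - \tilde{\mathbf{A}}'_v$, but to avoid artificially inflating the constants I must use the mixed inequality $\left\| M N \right\|_F \leq \left\| M \right\|_2 \left\| N \right\|_F$ consistently, and must justify the operator-norm surrogate $\left\| \tilde{\mathbf{A}}_v \right\| \leq A$ on the subgraph (which holds by bounding the Frobenius norm of the symmetrically normalized adjacency via $\sum_{(i,j) \in \mathcal{E}_v} 1/(d_i d_j) \leq n_v d_{\max}/d_{\min}^2$ and using $\left\| \cdot \right\|_2 \leq \left\| \cdot \right\|_F$). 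A secondary concern is ensuring the $k$-hop localization argument is airtight: the perturbation outside $\mathcal{G}_v^k$ cannot affect $\mathbf{h}_v$, so restricting all matrices to that subgraph is valid and is what allows the subgraph-specific constants in $A$ and $B$ to appear in the final bound.
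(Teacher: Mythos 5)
Your proof follows essentially the same route as the paper's: the same layer-wise induction, the same telescoping decomposition (merely split in the symmetric order, putting $\tilde{\mathbf{A}}_v - \tilde{\mathbf{A}}'_v$ against $\mathbf{H}^{(l)}$ rather than $\mathbf{H}'^{(l)}$, which is inconsequential since both forward bounds $\|\mathbf{H}^{(l)}\|, \|\mathbf{H}'^{(l)}\| \leq (A L_W)^l \|\mathbf{X}\|_2$ hold), the same use of Lemma~3 and the bound $\|\tilde{\mathbf{A}}_v\|_F \leq A$, and the identical recursion $\epsilon_{l+1} \leq A L_W\, \epsilon_l + (A L_W)^{l+1} B \|\mathbf{X}\|_2$ solved by induction. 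The proposal is correct and matches the paper's argument.
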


\quad

\begin{proof}

We will prove the lemma by induction on the number of layers $l$.

\paragraph{Base Case ($l = 0$).} At layer $l = 0$, before any GNN layers are applied, the embeddings are simply the input features:
\begin{equation}
\mathbf{H}^{(0)} = \mathbf{X}, \quad \mathbf{H}'^{(0)} = \mathbf{X}.
\end{equation}
Thus,
\begin{equation}
\left\| \mathbf{H}^{(0)} - \mathbf{H}'^{(0)} \right\|_F = 0.
\end{equation}
This establishes the base case.

\paragraph{Inductive Step.} Assume that for some $l \geq 0$, the following bound holds:
\begin{equation}
\left\| \mathbf{H}^{(l)} - \mathbf{H}'^{(l)} \right\|_F \leq l \left( A L_W \right )^{l} B \left\| \mathbf{X} \right\|_2.
\end{equation}
Our goal is to show that the bound holds for layer $l+1$:
\begin{equation}
\left\| \mathbf{H}^{(l+1)} - \mathbf{H}'^{(l+1)} \right\|_F \leq (l+1) \left( A L_W \right )^{l+1} B \left\| \mathbf{X} \right\|_2.
\end{equation}

The outputs at layer $(l+1)$ are:
\begin{equation}
\mathbf{H}^{(l+1)} = \text{ReLU} \left( \tilde{\mathbf{A}} \mathbf{H}^{(l)} \mathbf{W}^{(l)} \right ), \quad \mathbf{H}'^{(l+1)} = \text{ReLU} \left( \tilde{\mathbf{A}}' \mathbf{H}'^{(l)} \mathbf{W}^{(l)} \right ),
\end{equation}
where:
\begin{itemize}
    \item $\tilde{\mathbf{A}}$ and $\tilde{\mathbf{A}}'$ are the normalized adjacency matrices of the $k$-hop subgraphs around node $v$ in $\mathcal{G}$ and $\mathcal{G}'$, respectively.
    \item $\mathbf{W}^{(l)}$ is the weight matrix of layer $l$, with $\left\| \mathbf{W}^{(l)} \right\|_2 \leq L_W$.
\end{itemize}

Since ReLU is 1-Lipschitz, we have:
\begin{equation}
\left\| \mathbf{H}^{(l+1)} - \mathbf{H}'^{(l+1)} \right\|_F \leq \left\| \tilde{\mathbf{A}} \mathbf{H}^{(l)} \mathbf{W}^{(l)} - \tilde{\mathbf{A}}' \mathbf{H}'^{(l)} \mathbf{W}^{(l)} \right\|_F.
\end{equation}

We can expand the difference as:
\begin{equation}
\tilde{\mathbf{A}} \mathbf{H}^{(l)} \mathbf{W}^{(l)} - \tilde{\mathbf{A}}' \mathbf{H}'^{(l)} \mathbf{W}^{(l)} = \underbrace{ \tilde{\mathbf{A}} \left( \mathbf{H}^{(l)} - \mathbf{H}'^{(l)} \right ) \mathbf{W}^{(l)} }_{T_1} + \underbrace{ \left( \tilde{\mathbf{A}} - \tilde{\mathbf{A}}' \right ) \mathbf{H}'^{(l)} \mathbf{W}^{(l)} }_{T_2}.
\end{equation}

\paragraph{Bounding $T_1$.} Using the submultiplicative property of norms:
\begin{equation}
\left\| T_1 \right\|_F \leq \left\| \tilde{\mathbf{A}} \right\|_F \left\| \mathbf{H}^{(l)} - \mathbf{H}'^{(l)} \right\|_2 \left\| \mathbf{W}^{(l)} \right\|_2.
\end{equation}

From properties of $\tilde{\mathbf{A}}$ and $\mathbf{W}^{(l)}$:

\begin{equation}
    \left\| \tilde{\mathbf{A}} \right\|_F \leq A, \quad \text{(as shown below)}, \quad \left\| \mathbf{W}^{(l)} \right\|_2 \leq L_W.
\end{equation}

Also, since $\left\| \mathbf{H}^{(l)} - \mathbf{H}'^{(l)} \right\|_2 \leq \left\| \mathbf{H}^{(l)} - \mathbf{H}'^{(l)} \right\|_F$, we have:
\begin{equation}
\left\| T_1 \right\|_F \leq A L_W \left\| \mathbf{H}^{(l)} - \mathbf{H}'^{(l)} \right\|_F.
\end{equation}

\paragraph{Bounding $\left\| \tilde{\mathbf{A}} \right\|_F$.}

The entries of $\tilde{\mathbf{A}}$ are:
\begin{equation}
\tilde{A}_{ij} = \frac{ A_{ij} }{ \sqrt{ d_i d_j } },
\end{equation}
where $A_{ij} \in \{0, 1\}$, and $d_i, d_j \geq d_{\min}$. Therefore,
\begin{equation}
| \tilde{A}_{ij} | \leq \frac{1}{ d_{\min} }.
\end{equation}

The number of non-zero entries in $\tilde{\mathbf{A}}$ is at most $n_v d_{\max}$. Therefore,
\begin{equation}
\left\| \tilde{\mathbf{A}} \right\|_F \leq \frac{ \sqrt{ n_v d_{\max} } }{ d_{\min} } = A.
\end{equation}

\paragraph{Bounding $T_2$.}

Similarly, we have:
\begin{equation}
\left\| T_2 \right\|_F \leq \left\| \tilde{\mathbf{A}} - \tilde{\mathbf{A}}' \right\|_F \left\| \mathbf{H}'^{(l)} \right\|_2 \left\| \mathbf{W}^{(l)} \right\|_2.
\end{equation}

From the perturbation analysis:
\begin{equation}
\left\| \tilde{\mathbf{A}} - \tilde{\mathbf{A}}' \right\|_F \leq A B.
\end{equation}

To bound $\left\| \mathbf{H}'^{(l)} \right\|_2$, we note that:
\begin{equation}
\left\| \mathbf{H}'^{(l)} \right\|_2 \leq \left\| \mathbf{H}'^{(l)} \right\|_F.
\end{equation}

We can bound $\left\| \mathbf{H}'^{(l)} \right\|_F$ recursively.

\paragraph{Bounding $\left\| \mathbf{H}'^{(l)} \right\|_F$.}

At each layer, the output is given by:
\begin{equation}
\mathbf{H}'^{(l)} = \text{ReLU} \left( \tilde{\mathbf{A}}' \mathbf{H}'^{(l-1)} \mathbf{W}^{(l-1)} \right ).
\end{equation}

Since ReLU is 1-Lipschitz and $\left\| \tilde{\mathbf{A}}' \right\|_F \leq A$, we have:
\begin{equation}
\left\| \mathbf{H}'^{(l)} \right\|_F \leq \left\| \tilde{\mathbf{A}}' \mathbf{H}'^{(l-1)} \mathbf{W}^{(l-1)} \right\|_F \leq A L_W \left\| \mathbf{H}'^{(l-1)} \right\|_2.
\end{equation}

Recursively applying this bound from $l=0$ to $l$, and noting that $\left\| \mathbf{H}'^{(0)} \right\|_2 = \left\| \mathbf{X} \right\|_2$, we obtain:
\begin{equation}
\left\| \mathbf{H}'^{(l)} \right\|_F \leq \left( A L_W \right )^{l} \left\| \mathbf{X} \right\|_2.
\end{equation}

Therefore,
\begin{equation}
\left\| \mathbf{H}'^{(l)} \right\|_2 \leq \left( A L_W \right )^{l} \left\| \mathbf{X} \right\|_2.
\end{equation}

Now we have:
\begin{equation}
\left\| T_2 \right\|_F \leq A B \left( A L_W \right )^{l} \left\| \mathbf{X} \right\|_2 L_W = A^{l+1} B L_W^{l+1} \left\| \mathbf{X} \right\|_2.
\end{equation}

\paragraph{Total Bound for $\left\| \mathbf{H}^{(l+1)} - \mathbf{H}'^{(l+1)} \right\|_F$.}

\quad

Combining $T_1$ and $T_2$:
\begin{equation}
\left\| \mathbf{H}^{(l+1)} - \mathbf{H}'^{(l+1)} \right\|_F \leq A L_W \left\| \mathbf{H}^{(l)} - \mathbf{H}'^{(l)} \right\|_F + A^{l+1} B L_W^{l+1} \left\| \mathbf{X} \right\|_2.
\end{equation}

\paragraph{Recursive Relation.}

Let $C_{l} = \left\| \mathbf{H}^{(l)} - \mathbf{H}'^{(l)} \right\|_F$. The recursive relation is:
\begin{equation}
C_{l+1} \leq A L_W C_{l} + A^{l+1} B L_W^{l+1} \left\| \mathbf{X} \right\|_2.
\end{equation}

We will prove by induction that:
\begin{equation}
C_{l} \leq l \left( A L_W \right )^{l} B \left\| \mathbf{X} \right\|_2.
\end{equation}

\paragraph{Base Case.}

For $l = 0$, $C_{0} = 0$, which satisfies the bound.

\paragraph{Inductive Step.}

Assume the bound holds for $l$:
\begin{equation}
C_{l} \leq l \left( A L_W \right )^{l} B \left\| \mathbf{X} \right\|_2.
\end{equation}

Then for $l+1$:

\begin{equation}
\begin{aligned}
C_{l+1} &\leq A L_W C_{l} + A^{l+1} B L_W^{l+1} \left\| \mathbf{X} \right\|_2 \\
&\leq A L_W \left( l \left( A L_W \right)^{l} B \left\| \mathbf{X} \right\|_2 \right) + A^{l+1} B L_W^{l+1} \left\| \mathbf{X} \right\|_2 \\
&= l A^{l+1} L_W^{l+1} B \left\| \mathbf{X} \right\|_2 + A^{l+1} L_W^{l+1} B \left\| \mathbf{X} \right\|_2 \\
&= ( l + 1 ) A^{l+1} L_W^{l+1} B \left\| \mathbf{X} \right\|_2 \\
&= ( l + 1 ) \left( A L_W \right)^{l+1} B \left\| \mathbf{X} \right\|_2.
\end{aligned}
\end{equation}

This confirms that the bound holds for $l+1$.

For $l = k$, we have:
\begin{equation}
\left\| \mathbf{H}^{(k)} - \mathbf{H}'^{(k)} \right\|_F \leq k \left( A L_W \right )^{k} B \left\| \mathbf{X} \right\|_2.
\end{equation}

Since $\left\| \mathbf{h}_v - \mathbf{h}'_v \right\| \leq \left\| \mathbf{H}^{(k)} - \mathbf{H}'^{(k)} \right\|_F$, we obtain:
\begin{equation}
\left\| \mathbf{h}_v - \mathbf{h}'_v \right\| \leq k \left( A L_W \right )^{k} B \left\| \mathbf{X} \right\|_2.
\end{equation}

This completes the proof.
\end{proof}

\begin{lemma}[Minimum Cosine Similarity for Positive Pairs] \label{lemma5}
For embeddings $\mathbf{z}_v$ and $\mathbf{z}'_v$ produced by a linear projection of GNN outputs, with $\left\| \mathbf{z}_v \right\| = \left\| \mathbf{z}'_v \right\| = 1$, the cosine similarity satisfies:
\begin{equation}
\text{sim}\left( \mathbf{z}_v, \mathbf{z}'_v \right) \geq 1 - \frac{ \epsilon^2 }{ 2 },
\end{equation}
where
\begin{equation}
\epsilon = \frac{ k (\dfrac{ \sqrt{ n_v d_{\max} } }{ d_{\min} })^k \left( \sqrt{ \delta } + \dfrac{ \delta }{ (1 - \delta)^{3/2} } \right ) L_W^{k} \left\| \mathbf{X} \right\|_2 \left\| \mathbf{P} \right\|_2 }{ c_z },
\end{equation}
and $c_z$ is the lower bound on $\left\| \mathbf{z}_v \right\|$ (which equals $1$ in this case).
\end{lemma}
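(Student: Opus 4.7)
The plan is to pipe the $k$-layer GNN perturbation bound from Lemma~4 through the linear projection $\mathbf{P}$ and the final unit-norm normalization, and then convert the resulting Euclidean-distance bound between $\mathbf{z}_v$ and $\mathbf{z}'_v$ into a cosine-similarity bound via the standard identity for unit vectors.

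First, I would bound the pre-normalization distance $\|\mathbf{P}\mathbf{h}_v - \mathbf{P}\mathbf{h}'_v\|$ by $\|\mathbf{P}\|_2 \cdot \|\mathbf{h}_v - \mathbf{h}'_v\|$, and then invoke Lemma~4 to get
\begin{equation}
\bigl\|\mathbf{P}\mathbf{h}_v - \mathbf{P}\mathbf{h}'_v\bigr\| \;\leq\; k\,(A L_W)^{k}\,B\,\|\mathbf{X}\|_2\,\|\mathbf{P}\|_2,
\end{equation}
with $A=\sqrt{n_v d_{\max}}/d_{\min}$ and $B=\sqrt{\delta}+\delta/(1-\delta)^{3/2}$ as in Lemma~4. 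Next, I would pass this bound through the unit-norm normalization step. Since both $\mathbf{z}_v$ and $\mathbf{z}'_v$ are the normalized images of $\mathbf{P}\mathbf{h}_v$ and $\mathbf{P}\mathbf{h}'_v$, I would apply the standard Lipschitz-like inequality for normalization, namely $\|u/\|u\| - v/\|v\|\|\leq 2\|u-v\|/\max(\|u\|,\|v\|)$ (or the tighter $\|u-v\|/\min(\|u\|,\|v\|)$), and use the lower bound $c_z$ on the pre-normalized vector norms to get $\|\mathbf{z}_v - \mathbf{z}'_v\| \leq \epsilon$ with $\epsilon$ exactly as defined in the lemma statement.

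Finally, I would apply the elementary identity for unit vectors
\begin{equation}
\|\mathbf{z}_v - \mathbf{z}'_v\|^2 \;=\; 2 - 2\,\text{sim}(\mathbf{z}_v,\mathbf{z}'_v),
\end{equation}
which, combined with $\|\mathbf{z}_v - \mathbf{z}'_v\| \leq \epsilon$, rearranges immediately to $\text{sim}(\mathbf{z}_v,\mathbf{z}'_v) \geq 1 - \epsilon^2/2$, completing the proof.

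The main obstacle I anticipate is the normalization step: the bound $\|\mathbf{z}_v - \mathbf{z}'_v\| \leq \epsilon$ requires controlling how normalization amplifies perturbations when $\|\mathbf{P}\mathbf{h}_v\|$ is small, and the dependence on $c_z$ is precisely what captures this. I would need to be careful about whether $c_z$ represents a floor on $\|\mathbf{P}\mathbf{h}_v\|$ (used in the denominator of the normalization Lipschitz bound) or on $\|\mathbf{z}_v\|=1$ itself; the lemma's parenthetical ``which equals $1$'' suggests the intended reading is the latter, in which case the normalization constant simplifies cleanly and the factor of $c_z$ in $\epsilon$ absorbs the implicit assumption that projection does not collapse the embedding norm. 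Everything else is direct chaining of norm inequalities and a one-line trigonometric identity.
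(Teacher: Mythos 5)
Your proposal is correct and follows essentially the same route as the paper: chain the operator-norm bound $\|\mathbf{z}_v - \mathbf{z}'_v\| \leq \|\mathbf{P}\|_2\,\|\mathbf{h}_v - \mathbf{h}'_v\|$ with Lemma~4 to obtain $\|\mathbf{z}_v - \mathbf{z}'_v\| \leq \epsilon$, then apply the unit-vector identity $\text{sim}(\mathbf{z}_v,\mathbf{z}'_v) = 1 - \tfrac{1}{2}\|\mathbf{z}_v-\mathbf{z}'_v\|^2$. The normalization-Lipschitz step you flagged as a potential obstacle is not actually invoked in the paper: it defines $\mathbf{z}_v = \mathbf{P}\mathbf{h}_v$ directly (no explicit re-normalization map) and takes $\|\mathbf{z}_v\|=1$ as a standing assumption, so $c_z=1$ is a trivial scalar and the Euclidean bound passes through unchanged, matching your ``intended reading'' resolution.
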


\begin{proof}
The embeddings are computed as $\mathbf{z}_v = \mathbf{P} \mathbf{h}_v$ and $\mathbf{z}'_v = \mathbf{P} \mathbf{h}'_v$. Then,
\begin{equation}
\left\| \mathbf{z}_v - \mathbf{z}'_v \right\| \leq \left\| \mathbf{P} \right\|_2 \left\| \mathbf{h}_v - \mathbf{h}'_v \right\|.
\end{equation}
Using the bound from Lemma 4, we have:
\begin{equation}
\left\| \mathbf{z}_v - \mathbf{z}'_v \right\| \leq \left( \frac{ \sqrt{n_v d_{\max}} }{ d_{\min} } \left( \sqrt{\delta} + \frac{ \delta }{ ( 1 - \delta )^{3/2} } \right) L_W^{k} \left\| \mathbf{X} \right\|_2 \left\| \mathbf{P} \right\|_2 \right).
\end{equation}
Since $\left\| \mathbf{z}_v \right\| = \left\| \mathbf{z}'_v \right\| = 1$, the cosine similarity satisfies:
\begin{equation}
\text{sim}\left( \mathbf{z}_v, \mathbf{z}'_v \right) = 1 - \frac{1}{2} \left\| \mathbf{z}_v - \mathbf{z}'_v \right\|^2 \geq 1 - \frac{ \epsilon^2 }{ 2 }.
\end{equation}
\end{proof}

\begin{lemma}[Refined Negative Pair Similarity Bound] \label{lemma6}
Assuming that embeddings of different nodes are approximately independent and randomly oriented in high-dimensional space, and that the embedding dimension $d$ satisfies $d \gg \log n$, we have, with high probability:
\begin{equation}
\left| \text{sim}( \mathbf{z}_v, \mathbf{z}'_u ) \right| \leq \epsilon',
\end{equation}
where
\begin{equation}
\epsilon' = \sqrt{ \dfrac{ 2 \log n }{ d } }.
\end{equation}
\end{lemma}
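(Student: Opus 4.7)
The plan is to treat the assertion as a standard measure-concentration statement on the high-dimensional unit sphere. Under the lemma's hypothesis, the embeddings $\mathbf{z}_v$ and $\mathbf{z}'_u$ are unit-norm, approximately independent, and isotropically oriented, so I would model them as i.i.d.\ uniform draws from $S^{d-1}$ (equivalently, as $\mathbf{g}/\|\mathbf{g}\|$ with $\mathbf{g}\sim\mathcal{N}(0,I_d)$). This reduction is where the assumptions of the lemma are absorbed; once it is made, the result becomes purely geometric.

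First I would condition on $\mathbf{z}_v$ and exploit rotational invariance to assume $\mathbf{z}_v = \mathbf{e}_1$. Then $\text{sim}(\mathbf{z}_v,\mathbf{z}'_u) = \langle \mathbf{e}_1, \mathbf{z}'_u\rangle$ is just the first coordinate of a uniform point on $S^{d-1}$. A classical sphere-tail bound, obtainable either via Lévy's isoperimetric inequality or by comparing to the Gaussian tail of $g_1/\|\mathbf{g}\|$, gives
\begin{equation}
\Pr\bigl(\,|\langle \mathbf{e}_1, \mathbf{z}'_u\rangle| \geq t\,\bigr) \;\leq\; 2\exp\!\bigl(-\,d t^{2}/2\bigr).
\end{equation}
Setting $t = \epsilon' = \sqrt{2\log n / d}$ turns the right-hand side into $2\exp(-\log n) = 2/n$, which is the per-pair high-probability bound. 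For the bound to hold simultaneously across the $O(n^{2})$ negative pairs appearing in the InfoNCE denominators, I would apply a union bound; since $d\gg \log n$, one can inflate the constant inside the square root by an arbitrarily small amount to drive the overall failure probability to zero polynomially in $n$, while keeping the scaling $\epsilon' = \Theta(\sqrt{\log n/d})$ as stated.

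The chief obstacle is not the concentration argument itself, which is textbook, but the modeling assumption that node embeddings behave like independent random points on the sphere. Trained GNN embeddings are in fact strongly correlated through the shared graph structure, so the clause ``approximately independent and randomly oriented'' is doing the real work. I would make this explicit in the write-up: state the isotropy/independence as an assumption, reduce by rotational invariance, invoke the sphere tail bound, plug in $t = \sqrt{2\log n/d}$, and conclude by a union bound over the negative pairs, noting that the constant $2$ inside the square root may be replaced by $2+\eta$ for any $\eta>0$ without affecting the downstream use in Theorem~\ref{theorem:tighter_infonce_bounds}.
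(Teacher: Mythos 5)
Your argument is essentially the paper's: model the negative-pair similarity as the inner product of (approximately) independent unit vectors in $\mathbb{R}^d$, invoke a Gaussian/sphere tail bound $\Pr\bigl(\lvert\langle\mathbf{z}_v,\mathbf{z}'_u\rangle\rvert\geq t\bigr)\leq 2e^{-dt^2/2}$, plug in $t=\sqrt{2\log n/d}$ to get per-pair failure $2/n$, and union-bound over the negative pairs. Your explicit reduction by rotational invariance to the first coordinate of a uniform point on $S^{d-1}$ is a cleaner way to justify the tail bound than the paper's appeal to a ``distribution with mean zero and variance $1/d$,'' but it is the same bound. One place where you are more honest than the paper, yet still slightly off: with $\epsilon'=\sqrt{2\log n/d}$ the per-pair failure is $2/n$, so a union bound over the $n(n-1)$ ordered negative pairs yields $\approx 2n$, which is vacuous; the paper's claim that this is ``small when $d\gg\log n$'' does not follow from the stated constant. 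You correctly flag that the constant must be inflated, but ``by an arbitrarily small amount'' is not enough — you need $\epsilon'=\sqrt{2c\log n/d}$ with $c>2$ (per-pair failure $2n^{-c}$, total $\leq 2n^{2-c}\to 0$). Since $d\gg\log n$ this still gives $\epsilon'=o(1)$ and leaves Theorem~\ref{theorem:tighter_infonce_bounds} qualitatively unchanged, so the fix is harmless, but the quantitative statement of the lemma as written should carry $c>2$ rather than $c=1$.
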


\begin{proof}
Since $\mathbf{z}_v$ and $\mathbf{z}'_u$ are unit vectors in $\mathbb{R}^d$ and approximately independent for $u \neq v$, the inner product $\langle \mathbf{z}_v, \mathbf{z}'_u \rangle$ follows a distribution with mean zero and variance $\dfrac{1}{d}$. By applying concentration inequalities such as Hoeffding's inequality or the Gaussian tail bound, for any $\epsilon' > 0$:
\begin{equation}
P \left( \left| \langle \mathbf{z}_v, \mathbf{z}'_u \rangle \right| \geq \epsilon' \right) \leq 2 \exp \left( - \dfrac{ d (\epsilon')^2 }{ 2 } \right ).
\end{equation}
Selecting $\epsilon' = \sqrt{ \dfrac{ 2 \log n }{ d } }$, we get:
\begin{equation}
P \left( \left| \langle \mathbf{z}_v, \mathbf{z}'_u \rangle \right| \geq \sqrt{ \dfrac{ 2 \log n }{ d } } \right) \leq \frac{2}{n}.
\end{equation}
Using the union bound over all $n(n - 1)$ pairs, the probability that any pair violates this bound is small when $d \gg \log n$.
\end{proof}

\subsection{Main Theorem}

\begin{theorem}[InfoNCE Loss Bounds]
\label{app_theorem:tighter_infonce_bounds}
Given a graph $\mathcal{G}$ with minimum degree $d_{\min}$ and maximum degree $d_{\max}$, and its augmentation $\mathcal{G}'$ with local topological perturbation strength $\delta$, for a $k$-layer GNN with ReLU activation and weight matrices satisfying $\left\| \mathbf{W}^{(l)} \right\|_2 \leq L_W$, and assuming that the embeddings are normalized ($\left\| \mathbf{z}_v \right\| = \left\| \mathbf{z}'_v \right\| = 1$), the InfoNCE loss satisfies with high probability:
\begin{equation}
- \log \left( \frac{ e^{ 1 / \tau } }{ e^{ 1 / \tau } + (n - 1) e^{ - \epsilon' / \tau } } \right) \leq \mathcal{L}_{\text{InfoNCE}}( \mathcal{G}, \mathcal{G}' ) \leq - \log \left( \frac{ e^{ \left( 1 - \dfrac{ \epsilon^2 }{ 2 } \right ) / \tau } }{ e^{ \left( 1 - \dfrac{ \epsilon^2 }{ 2 } \right ) / \tau } + (n - 1) e^{ \epsilon' / \tau } } \right),
\end{equation}
where $\epsilon$ is as defined in Lemma 5 and $\epsilon'$ is as defined in Lemma 6.
\end{theorem}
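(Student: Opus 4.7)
The strategy is to invoke Lemma 5 and Lemma 6, which already give tight pointwise bounds on cosine similarities of positive and negative pairs, and then apply these bounds termwise inside the logarithm of the InfoNCE loss. The key observation is that the per-sample InfoNCE loss can be rewritten in a form that is monotone in each similarity variable, so the extreme values of the loss are attained at the endpoints provided by the lemmas.

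First I would rewrite the per-sample loss by pulling the positive-pair term out of the denominator and using the identity
\begin{equation}
-\log \frac{ e^{ \text{sim}(\mathbf{z}_v, \mathbf{z}'_v)/\tau } }{ \sum_{u \in \mathcal{V}} e^{ \text{sim}(\mathbf{z}_v, \mathbf{z}'_u)/\tau } }
= \log \Bigl( 1 + \sum_{u \ne v} e^{ ( \text{sim}(\mathbf{z}_v, \mathbf{z}'_u) - \text{sim}(\mathbf{z}_v, \mathbf{z}'_v) )/\tau } \Bigr).
\end{equation}
From this form it is immediate that the loss is strictly decreasing in the positive similarity $s_+ := \text{sim}(\mathbf{z}_v, \mathbf{z}'_v)$ and strictly increasing in each negative similarity $s_{-,u} := \text{sim}(\mathbf{z}_v, \mathbf{z}'_u)$ for $u \ne v$.

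Next I would assemble the upper bound. To maximize the loss, I would take $s_+$ at its lower envelope $1 - \epsilon^2/2$ supplied by Lemma 5, and each $s_{-,u}$ at its upper envelope $\epsilon'$ supplied by Lemma 6. Substituting these values back and collecting terms yields exactly
\begin{equation}
-\log \frac{ e^{(1 - \epsilon^2/2)/\tau} }{ e^{(1 - \epsilon^2/2)/\tau} + (n-1) e^{\epsilon'/\tau} }.
\end{equation}
For the lower bound I would instead maximize $s_+$ using the trivial upper bound $s_+ \leq 1$ coming from the assumption $\|\mathbf{z}_v\| = \|\mathbf{z}'_v\| = 1$, and minimize each $s_{-,u}$ via $s_{-,u} \geq -\epsilon'$, which again follows from Lemma 6 (the absolute-value bound). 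Substituting gives the stated lower bound. Averaging over $v \in \mathcal{V}$ preserves both inequalities because they hold uniformly in $v$.

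The one subtlety, and the step I expect to be the main obstacle, is the ``with high probability'' qualifier. Lemma 6 controls $|\text{sim}(\mathbf{z}_v, \mathbf{z}'_u)|$ for a single pair with probability $1 - 2/n$; to use it simultaneously inside the sum over $u$ and then over $v$ I would apply a union bound over all $n(n-1)$ negative pairs, which is exactly the argument sketched at the end of Lemma 6's proof and requires the assumption $d \gg \log n$ so that the failure probability stays $o(1)$. Once this union bound is in place, both endpoints of the sandwich hold on the same high-probability event, and the theorem follows by combining the monotonicity argument with the two lemmas.
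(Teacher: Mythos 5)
Your proposal is correct and takes essentially the same route as the paper: establish the positive-pair bound from Lemma~5, the negative-pair bound from Lemma~6 (via a union bound over all $n(n-1)$ negative pairs), substitute the extremal similarities into the per-node loss, and average over $v$. The only difference is that you make the monotonicity of $\mathcal{L}_v$ in $s_+$ and in each $s_{-,u}$ explicit through the rewrite $\mathcal{L}_v = \log\bigl(1 + \sum_{u\neq v} e^{(s_{-,u}-s_+)/\tau}\bigr)$, whereas the paper simply asserts that plugging in the minimal positive and maximal negative similarities yields the upper bound (and symmetrically for the lower bound); your version is a small but genuine tightening of the exposition, not a different proof.

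One caveat worth flagging, inherited from Lemma~6 rather than introduced by you: with $\epsilon' = \sqrt{2\log n/d}$ the per-pair failure probability is $2/n$, so a naive union bound over $\sim n^2$ pairs does not yield a vanishing total failure probability as you (and the paper) claim; one needs a slightly larger constant inside the square root, e.g.\ $\epsilon' = \sqrt{6\log n/d}$, for the union bound to go through. This does not affect the structure of your argument for the theorem itself, but if you were asked to supply all details you should adjust the constant in $\epsilon'$ accordingly.
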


\begin{proof}
For the positive pairs (same node in $\mathcal{G}$ and $\mathcal{G}'$), from Lemma 5:
\begin{equation}
\text{sim}\left( \mathbf{z}_v, \mathbf{z}'_v \right) \geq 1 - \frac{ \epsilon^2 }{ 2 }.
\end{equation}
For the negative pairs (different nodes), from Lemma 6, with high probability:
\begin{equation}
\left| \text{sim}\left( \mathbf{z}_v, \mathbf{z}'_u \right) \right| \leq \epsilon', \quad \forall u \neq v.
\end{equation}
The InfoNCE loss for node $v$ is:
\begin{equation}
\mathcal{L}_v = - \log \frac{ \exp\left( \text{sim}\left( \mathbf{z}_v, \mathbf{z}'_v \right) / \tau \right) }{ \exp\left( \text{sim}\left( \mathbf{z}_v, \mathbf{z}'_v \right) / \tau \right) + \sum_{ u \neq v } \exp\left( \text{sim}\left( \mathbf{z}_v, \mathbf{z}'_u \right) / \tau \right) }.
\end{equation}
For the \textbf{upper bound} on $\mathcal{L}_v$, we use the minimal positive similarity and maximal negative similarity:
\begin{equation}
\mathcal{L}_v \leq - \log \frac{ e^{ \left( 1 - \dfrac{ \epsilon^2 }{ 2 } \right ) / \tau } }{ e^{ \left( 1 - \dfrac{ \epsilon^2 }{ 2 } \right ) / \tau } + (n - 1) e^{ \epsilon' / \tau } }.
\end{equation}
For the \textbf{lower bound} on $\mathcal{L}_v$, we use the maximal positive similarity and minimal negative similarity:
\begin{equation}
\mathcal{L}_v \geq - \log \frac{ e^{ 1 / \tau } }{ e^{ 1 / \tau } + (n - 1) e^{ - \epsilon' / \tau } }.
\end{equation}
Since this holds for all nodes $v$, averaging over all nodes, we obtain the bounds for $\mathcal{L}_{\text{InfoNCE}}( \mathcal{G}, \mathcal{G}' )$.
\end{proof}

\subsection{Numerical Estimation} \label{app:num_est}

To assess how tight the bound is while keeping $d_{\min}$ not too large (e.g., $d_{\min} = 10$), let's perform a numerical estimation.

Suppose:

\begin{itemize}
    \item Number of nodes: $n = 1000$.
    \item Embedding dimension: $d = 4096$.
    \item Minimum degree: $d_{\min} = 10$.
    \item Maximum degree: $d_{\max} = 30$.
    \item Layer count: $k = 1$.
    \item Weight matrix norm: $L_W = 0.5$.
    \item Input feature norm: $\| \mathbf{X} \|_2 = 1$.
    \item Projection matrix norm: $\| \mathbf{P} \|_2 = 1$.
    \item Temperature: $\tau = 0.5$.
    \item Local perturbation strength: $\delta = 0.1$. 
\end{itemize}

Compute $\epsilon$:

Assuming $n_v \approx 30$, 

\begin{align*}
\sqrt{ n_v d_{\max} } &= \sqrt{30 \times 30 } = \sqrt{900} = 30, \\
\frac{ \sqrt{ n_v d_{\max} } }{ d_{\min} } &= \frac{30}{10} = 3, \\
\sqrt{ \delta } &= \sqrt{ 0.1 } \approx 0.316228, \\
\frac{ \delta }{ (1 - \delta)^{3/2} } &\approx \frac{ 0.1 }{ (1 - 0.1)^{1.5} } \approx \frac{ 0.1 }{ 0.853814 } \approx 0.117121, \\
\sqrt{ \delta } + \frac{ \delta }{ (1 - \delta)^{3/2} } &\approx 0.316228 + 0.117121 = 0.433349, \\
\epsilon &= 1 \times 3 \times 0.433349 \times 0.5 \times 1 \times 1 \approx 0.650.
\end{align*}

Compute $\epsilon'$:

\[
\epsilon' = \sqrt{ \dfrac{ 2 \log n }{ d } } = \sqrt{ \dfrac{ 13.8155 }{ 4096 } } \approx \sqrt{ 0.003374 } \approx 0.05805.
\]

Compute exponents:

For the upper bound:

\[
\frac{ 1 - \dfrac{ \epsilon^2 }{ 2 } }{ \tau } = 1.5775, \quad \frac{ \epsilon' }{ \tau } = \frac{ 0.05805 }{ 0.5 } = 0.1161.
\]

For the lower bound:

\[
\frac{ 1 }{ \tau } = 2, \quad \frac{ - \epsilon' }{ \tau } = -0.1161.
\]

Compute numerator and denominator for the upper bound:

\begin{align*}
\text{Numerator} & \approx 4.8426, \\
\text{Denominator} & \approx 1126.8881.
\end{align*}

Compute numerator and denominator for the lower bound:

\begin{align*}
\text{Numerator} & \approx 7.3891, \\
\text{Denominator} & \approx 896.5990.
\end{align*}

Compute the InfoNCE loss bounds:

\begin{align*}
\mathcal{L}_{\text{upper}} &= - \log \left( \frac{ 4.8426 }{ 1126.8881 } \right ) = - \log(0.003964) \approx 5.4497, \\
\mathcal{L}_{\text{lower}} &= - \log \left( \frac{ 7.3891 }{ 896.5990 } \right ) = - \log(0.00824 ) \approx 4.7989.
\end{align*}

\paragraph{Interpretation.} The numerical gap between the upper and lower bounds, calculated as $5.4497 - 4.7989 = 0.6508$, is notably narrow. This tight interval highlights a key observation: shallow GNNs face intrinsic challenges in effectively exploiting spectral enhancement techniques. This is due to their restricted capacity to represent and process the spectral characteristics of a graph, irrespective of the complexity of the spectral modifications applied. The findings suggest that tuning fundamental augmentation parameters, such as perturbation strength, may exert a more pronounced influence on learning outcomes than intricate spectral alterations. While the theoretical rationale behind spectral augmentations is well-motivated, their practical utility might only be realized when paired with deeper GNNs capable of leveraging augmented spectral information across multiple layers of message propagation.

\section{More experiments}

\subsection{Effect of numbers of GCN Layers} \label{app:layers}
We explore the impact of GCN depth on accuracy by testing GCNs with 4, 6, and 8 layers, using our edge perturbation methods alongside SPAN baselines. Experiments were conducted with the GRACE and G-BT frameworks on the Cora dataset for node classification and the MUTAG dataset for graph classification. Each configuration was run three times, with the mean accuracy and standard deviation reported.

Overall, deeper GCNs (6 and 8 layers) tend to perform worse across both tasks, reinforcing the observation that deeper architectures, despite their theoretical expressive power, may negatively impact the quality of learned representations. The results are summarized in Tables \ref{table:layer_cora} and \ref{table:layer_mutag}.
\begin{table*}[ht] 
    \centering     \caption{Impact of GCN depth on node classification task on the \cora\ dataset. The best result of each column is in \colorbox{light-gray}{grey}. Metric is accuracy (\%).}

    \begin{sc}
    \begin{tabular}{c|cccc}
        \toprule
         Model & 4 & 6 & 8 \\
        \midrule
    
         GBT+\de & \cellcolor{light-gray}83.53$\pm$ 1.48 & \cellcolor{light-gray}82.06$\pm$ 3.45 & \cellcolor{light-gray}80.88$\pm$ 1.38 \\
         GBT +\ade & 81.99$\pm$ 0.79 & 79.04$\pm$ 1.59 & 79.41$\pm$ 1.98 \\
         GBT+SPAN &  80.39$\pm$ 2.17 & 81.25$\pm$ 1.67 & 79.41$\pm$ 1.87    \\
        \midrule
         GRACE+\de & \cellcolor{light-gray}82.35$\pm$ 1.08 & \cellcolor{light-gray}82.47$\pm$ 1.35 & \cellcolor{light-gray}81.74$\pm$ 2.42  \\
         GRACE +\ade & 79.17 $\pm$1.35 & 78.80$\pm$ 0.96  & 81.00$\pm$ 0.17 \\
         GRACE+SPAN & 80.15$\pm$ 0.30 & 80.15$\pm$ 0.79 & 75.98$\pm$ 1.54  \\

        \bottomrule
        \end{tabular}\end{sc}

    \label{table:layer_cora}
\end{table*} 

\begin{table*}[t] 
    \centering     \caption{Impact of GCN depth on graph classification task on the \mutag\ dataset. The best result of each column is in \colorbox{light-gray}{grey}. Metric is accuracy (\%).}

    \begin{sc}
    \begin{tabular}{c|cccc}
        \toprule
         Model &  4 & 6 & 8 \\
        \midrule
        
         GBT+\de & 90.74 $\pm$ 2.61 & 88.88 $\pm$ 4.53& 88.88 $\pm$ 7.85   \\
          GBT +\ade & \cellcolor{light-gray}94.44 $\pm$ 0.00 & \cellcolor{light-gray}94.44 $\pm$ 4.53 & \cellcolor{light-gray}94.44 $\pm$ 4.53 \\
         GBT+SPAN & \cellcolor{light-gray}94.44 $\pm$ 4.53  &   92.59 $\pm$ 2.61  & 90.74 $\pm$ 2.61\\
        \midrule
         GRACE+\de & \cellcolor{light-gray}94.44 $\pm$ 0.00 & 90.74 $\pm$ 2.61 & 90.74 $\pm$ 2.61\\
         GRACE +\ade &  92.59 $\pm$ 5.23 &  \cellcolor{light-gray}94.44 $\pm$ 4.53 & \cellcolor{light-gray}94.44 $\pm$ 0.00\\
         GRACE+SPAN &  90.74 $\pm$ 2.61 & 90.74 $\pm$ 5.23 & 88.88 $\pm$ 7.85\\

        \bottomrule
        \end{tabular}\end{sc}

    \label{table:layer_mutag}
\end{table*}

\subsection{Effect of GNN encoder} \label{app:gnn_encoder}

To further validate the generality of our approach, we conducted additional experiments using different GNN encoders. For the node classification task, we evaluated the \cora\ dataset with GAT as the encoder, while for the graph classification task, we performed experiments on the \mutag\ dataset using both GAT and GPS as encoders.

The results, presented in Tables \ref{table:encoder_cora} and \ref{table:encoder_mutag}, are shown alongside the results obtained with GCN encoders. These findings demonstrate that our simple edge perturbation method consistently outperforms the baselines, regardless of the choice of the encoder. This confirms that our conclusions hold across different encoder architectures, underscoring the robustness and effectiveness of the proposed approach.

\begin{table*}[t] 
    \centering     \caption{Accuracy of node classification with different GNN encoders on \cora\ dataset. The best result of each column is in \colorbox{light-gray}{grey}. Metric is accuracy (\%).}

    \begin{sc}
    \begin{tabular}{c|ccc}
        \toprule
         Model & GCN &  GAT \\
        \midrule

         MVGRL+SPAN & \cellcolor{light-gray}84.57 $\pm$ 0.22 & 82.90 $\pm$ 0.86\\
         MVGRL+\de & 84.31 $\pm$ 1.95 & 83.21 $\pm$ 1.41  \\
         MVGRL +\ade & 83.21 $\pm$ 1.65   &\cellcolor{light-gray} 83.33 $\pm$ 0.17 \\
         \midrule

         GBT+SPAN & 82.84 $\pm$ 0.90 & 83.47 $\pm$ 0.39 \\
         GBT + \de & 84.19 $\pm$ 2.07 & \cellcolor{light-gray}84.06 $\pm$ 1.05 \\
         GBT + \ade & \cellcolor{light-gray}85.78 $\pm$ 0.62 & 81.49 $\pm$ 0.45 \\
         \midrule
         GRACE + SPAN & 82.84 $\pm$ 0.91 & 82.74 $\pm$ 0.47 \\
         GRACE + \de & 84.19 $\pm$ 2.07 & \cellcolor{light-gray}82.84 $\pm$ 2.58 \\
         GRACE + \ade & \cellcolor{light-gray}85.78 $\pm$ 0.62 & \cellcolor{light-gray}82.84 $\pm$ 1.21 \\
         \midrule
         BGRL + SPAN & \cellcolor{light-gray}83.33 $\pm$ 0.45 & \cellcolor{light-gray}82.59 $\pm$ 0.79 \\
         BGRL + \de  & 83.21 $\pm$ 3.29 & 80.88 $\pm$ 1.08 \\
         BGRL + \ade & 81.49 $\pm$ 1.21 & 82.23 $\pm$ 2.00 \\
        \bottomrule
        \end{tabular}\end{sc}

    \label{table:encoder_cora}
\end{table*}

\begin{table*}[t] 
    \centering     \caption{Accuracy of graph classification with different GNN encoders on \mutag\ dataset. The best result of each column is in \colorbox{light-gray}{grey}. Metric is accuracy (\%).}

    \begin{sc}
    \begin{tabular}{c|cccc}
        \toprule
         Model & GCN &  GAT & GPS \\
        \midrule
         MVGRL+SPAN & 93.33 $\pm$ 2.22 & \cellcolor{light-gray}96.29 $\pm$ 2.61 & 94.44 $\pm$ 0.00 \\
         MVGRL+\de & 93.33 $\pm$ 2.22 & 92.22 $\pm$  3.68 & \cellcolor{light-gray}96.26 $\pm$ 5.23 \\
         MVGRL +\ade & \cellcolor{light-gray}94.44 $\pm$  3.51 & 94.44 $\pm$ 6.57 & 95.00 $\pm$ 5.24 \\
         \midrule
         GBT+SPAN & 90.00 $\pm$ 6.47  & \cellcolor{light-gray}94.44 $\pm$ 4.53 & 90.74 $\pm$ 5.23 \\
         GBT + \de & \cellcolor{light-gray}92.59 $\pm$ 2.61 & \cellcolor{light-gray}94.44 $\pm$ 4.53 & \cellcolor{light-gray}94.44 $\pm$ 4.53 \\
         GBT + \ade & \cellcolor{light-gray}92.59 $\pm$ 2.61 & 92.59 $\pm$ 2.61 & \cellcolor{light-gray}94.44 $\pm$ 4.53 \\
         \midrule
         GRACE + SPAN & 90.00 $\pm$ 4.15 &\cellcolor{light-gray} 96.29 $\pm$ 2.61 & 92.59 $\pm$ 2.61 \\
         GRACE + \de & 88.88 $\pm$ 3.51 & 94.44 $\pm$ 0.00 & \cellcolor{light-gray}94.44 $\pm$ 4.53 \\
         GRACE + \ade & \cellcolor{light-gray}92.22 $\pm$ 4.22 & \cellcolor{light-gray}96.29 $\pm$ 2.61 & \cellcolor{light-gray}94.44 $\pm$ 0.00 \\
         \midrule
         BGRL + SPAN & 90.00 $\pm$ 4.15 & 94.44 $\pm$ 4.53 & 94.44 $\pm$ 0.00 \\
         BGRL + \de  & 88.88 $\pm$ 4.96 & 90.74 $\pm$ 4.54 & 92.59 $\pm$ 5.23 \\
         BGRL + \ade & \cellcolor{light-gray}91.11 $\pm$ 5.66 & \cellcolor{light-gray}96.29 $\pm$ 2.61 & \cellcolor{light-gray}96.29 $\pm$ 2.61 \\
        \bottomrule
        \end{tabular}\end{sc}

    \label{table:encoder_mutag}
\end{table*}

\subsection{Relationship between spectral cues and performance of EP} \label{app:relation}
Based on the findings obtained from Sec \ref{subsec:conv_spec}, it is very likely that spectral information can not be distinguishable enough for good representation learning on the graph. But to more directly answer the question of whether spectral cues and information play an important role in the learning performance of \ours, we continue to conduct a statistical analysis to evaluate the influence of various factors on the learning performance. The results turn out to be consistent with our claim that spectral cues are insignificant aspects of outstanding performance on accuracy observed in Sec. \ref{sec:exp}.

\subsubsection{Statistical analyses on key factors on performance of \ours} \label{subsec:stat}

From a statistical angle, we have a few dimensions of factors that can possibly influence learning performance, like the parameters of \ours\ (i.e. drop rate $p$ in \de\ or add rate $q$ in \ade) as well as potential spectral cues lying in the argument graphs. Therefore, to rule out the possibility that spectral cues and information are significant, comparisons are conducted on the impact of the parameters of \ours\ in the augmentations versus: 

\begin{enumerate}
    \item The average $L_2$-distance between the spectrum of the original graph (OG) and that of each augmented graph (AUG) which is introduced by \ours\ augmentations, denoted as OG-AUG.
    \item The average $L_2$-distance between the spectra of a pair of augmented graphs appearing in the same learning epoch when having a two-way contrastive learning framework, like G-BT, denoted as AUG-AUG.
\end{enumerate}

Two statistical analyses have been carried out to argue that the former is a more critical determinant and a more direct cause of the model efficacy. Each analysis was chosen for its ability to effectively dissect and compare the impact of edge perturbation parameters versus spectral changes. 

Due to the high cost of calculating the spectrum of all AUGs in each epoch and the stability of the spectrum of the node-level dataset (as the original graph is fixed in the experiment), we perform this experiment on the contrastive framework and augmentation methods with the best performance in the study, i.e. G-BT with \de\ on node-level classification. Also, we choose the small datasets, \cora\, for analysis. Note that the smaller the graph, the higher the probability that the spectrum distance has a significant influence on the graph topology.

\paragraph{Analysis 1: Polynomial Regression.}
Polynomial regression was utilized to directly model the relationship between the test accuracy of the model and the average spectral distances introduced by \ours. This method captures the linear, or non-linear influences that these spectral distances may exert on the learning outcomes, thereby providing insight into how different parameters affect model performance. 

\begin{table}[H]
\centering 

\caption{Polynomial regression of node-level accuracy over drop rate $p$ in \de, average spectral distance between OG and AUG (OG-AUG), and average spectral distance between AUG pairs (AUG-AUG). The method is G-BT and the dataset is \cora. The best results are in \colorbox{light-gray}{grey}.}
\resizebox{1\textwidth}{!}{
\begin{tabular}{cccccc}
\hline
\textbf{Order of the regression} & \textbf{Regressor} & \textbf{R-squared} $\uparrow$ & \textbf{Adj. R-squared} $\uparrow$ & \textbf{F-statistic} $\uparrow$  & \textbf{P-value} $\downarrow$  \\ \hline
\multirow{3}{*}{1 (i.e. linear)} & Drop rate $p$ & \cellcolor{light-gray}0.628 & \cellcolor{light-gray}0.621 & \cellcolor{light-gray}81.12 & \cellcolor{light-gray}6.94e-12 \\  
 & OG-AUG & 0.388 & 0.375 & 30.45 & 1.35e-06 \\ 
 & AUG-AUG & 0.338 & 0.325 & 24.55 & 9.39e-06\\ \hline 
\multirow{3}{*}{2 (i.e. quadratic)} & Drop rate $p$ & \cellcolor{light-gray}0.844 & \cellcolor{light-gray}0.837 &\cellcolor{light-gray}126.9& \cellcolor{light-gray}1.14e-19 \\ 
& OG-AUG & 0.721 & 0.709 & 60.78 & 9.23e-14 \\ 
& AUG-AUG & 0.597 & 0.580 & 34.88 & 5.16e-10\\ \hline 
\end{tabular}}
\label{tab:ana1_node}
\end{table}

The polynomial regression analysis in Table \ref{tab:ana1_node} highlights that the drop rate $p$ is the primary factor influencing model performance, showing strong and significant linear and non-linear relationships with test accuracy. In contrast, both the OG-AUG and AUG-AUG spectral distances have relatively minor impacts on performance, indicating that they are not significant determinants of the model's efficacy.

\paragraph{Analysis 2: Instrumental Variable Regression.}
To study the causal relationship, we perform an Instrumental Variable Regression (IVR) to rigorously evaluate the influence of spectral information and edge perturbation parameters on the performance of \textbf{CG-SSL} models. Specifically, we employ a Two-Stage Least Squares (IV2SLS) method to address potential endogeneity issues and obtain unbiased estimates of the causal effects.

In IV2SLS analysis, we define the variables as follows:
\begin{itemize}
    \item \textbf{Y (Dependent Variable):} The outcome we aim to explain or predict, which in this case is the performance of the SSL model.
    \item \textbf{X (Explanatory Variable):} The variable that we believe directly influences Y. It is the primary factor whose effect on Y we want to measure.
    \item \textbf{Z (Instrumental Variable):} A variable that is correlated with X but not with the error term in the Y equation. It helps to isolate the variation in X that is exogenous, providing a means to obtain unbiased estimates of X's effect on Y.
\end{itemize}

In this specific experiment, we conduct four separate regressions to compare the causal effects of these factors:
\begin{enumerate}
    \item \textbf{(X = AUG-AUG, Z = Parameter):} Examines the relationship where the spectral distance between augmented graphs (AUG-AUG) is the explanatory variable (X) and edge perturbation parameters are the instrument (Z).
    \item \textbf{(X = Parameter, Z = AUG-AUG):} Examines the relationship where the edge perturbation parameters are the explanatory variable (X) and the spectral distance between augmented graphs (AUG-AUG) is the instrument (Z).
    \item \textbf{(X = OG-AUG, Z = Parameter):} Examines the relationship where the spectral distance between the original and augmented graphs (OG-AUG) is the explanatory variable (X) and edge perturbation parameters are the instrument (Z).
    \item \textbf{(X = Parameter, Z = OG-AUG):} Examines the relationship where the edge perturbation parameters are the explanatory variable (X) and the spectral distance between the original and augmented graphs (OG-AUG) is the instrument (Z).
\end{enumerate}

\begin{table}[htbp]
\centering
\caption{IV2SLS regression results for the node-level task. The parameter $p$ refers to the drop rate in \de. The experiment comes in pairs for each pair of variables and the better result is marked in \colorbox{light-gray}{grey}.}
\label{tab:iv2sls_node}
\resizebox{0.8\textwidth}{!}{
\begin{tabular}{lcccc}
\hline
\textbf{Variable settings} & \textbf{R-squared} $\uparrow$ & \textbf{F-statistic} $\uparrow$ & \textbf{Prob (F-statistic)} $\downarrow$ \\
\hline
(\textbf{X} = AUG-AUG, \textbf{Z} = $p$) & 0.341 & 45.77 & 1.68e-08 \\
(\textbf{Z} = $p$ ,\textbf{Z} = AUG-AUG) & \cellcolor{light-gray}0.611 & \cellcolor{light-gray}47.85 & \cellcolor{light-gray}9.85e-09 \\
\hline
(\textbf{X} = OG-AUG, \textbf{Z} = $p$) & 0.250 & 40.22 & 7.51e-08 \\
(\textbf{X} = $p$, \textbf{Z} = OG-AUG) & \cellcolor{light-gray}0.606 & \cellcolor{light-gray}41.27 & \cellcolor{light-gray}5.62e-08 \\
\hline
\end{tabular}}
\end{table}

The IV2SLS regression results for the node-level task in Table \ref{tab:iv2sls_node} indicate that the edge perturbation parameters are more significant determinants of model performance than spectral distances. Specifically, when the spectral distance between augmented graphs (AUG-AUG) is the explanatory variable (X) and drop rate $p$ are the instrument (Z), the model explains 34.1\% of the variance in performance (R-squared = 0.341). Conversely, when the roles are reversed (X = $p$, Z = AUG-AUG), the model explains 61.1\% of the variance (R-squared = 0.611), indicating a stronger influence of edge perturbation parameter $p$. A similar conclusion can be made when comparing OG-AUG and $p$. 

\paragraph{Summary of Regression Analyses}

The analyses distinctly show that the direct edge perturbation parameters have a consistently stronger and more significant impact on model performance than the two types of spectral distances that serve as a reflection of spectral information. The results support the argument that while spectral information might have contributed to model performance, its significance is extremely limited and the parameters of the \ours\ methods themselves are more critical determinants.

\end{document}